\documentclass{article}

\PassOptionsToPackage{numbers}{natbib}
\usepackage[preprint]{neurips_2026}

\usepackage[utf8]{inputenc} 
\usepackage[T1]{fontenc}    
\usepackage{hyperref}       
\usepackage{url}            
\usepackage{booktabs}       
\usepackage{amsfonts}       
\usepackage{nicefrac}       
\usepackage{microtype}      
\usepackage{xcolor}         

\usepackage{hyperref}       %
\usepackage{url}            %
\usepackage{booktabs}       %
\usepackage{amsfonts}       %
\usepackage{nicefrac}       %
\usepackage{microtype}      %
\usepackage{xcolor}         %
\usepackage{amsmath,amsfonts,bm}
\usepackage{amssymb}
\usepackage{graphicx}
\usepackage{xspace}
\usepackage{mathtools}
\usepackage{amsthm}
\usepackage{amssymb}
\usepackage{mathabx}

\usepackage{tikz}
\usetikzlibrary{trees}

\usepackage{algorithm, algorithmic}
\usepackage{wrapfig}

\usepackage{tikz}
\usetikzlibrary{trees}

\newtheorem{theorem}{Theorem}[section]
\newtheorem{corollary}[theorem]{Corollary}
\newtheorem{lemma}[theorem]{Lemma}
\newtheorem{proposition}[theorem]{Proposition}

\newtheorem{example}{Example}[section]

\newtheorem{remark}{Remark}[section]
\newtheorem{assumption}{Assumption}[section]

\newcommand{\argmin}{\mathop{\mathrm{argmin}}}
\newcommand{\argmax}{\mathop{\mathrm{argmax}}}

\usepackage[ruled,algo2e]{algorithm2e}
\SetKwComment{Comment}{$//$ }{}
\usepackage{subcaption}

\title{Future Information-Directed Sampling for Bayesian Nonstationary Bandits}

\author{%
  Yichen Song \\
  Boston University\\
  \texttt{ycs@bu.edu} \\
  \And
  Alessio Russo \\
  Boston University \\
  \texttt{arusso2@bu.edu} \\
  \AND
  Aldo Pacchiano \\
  Boston University \\
  Broad Institute of MIT and Harvard \\
  \texttt{pacchian@bu.edu}\\
}

\begin{document}

\maketitle

\begin{abstract}
    Exploration--exploitation is a central trade-off in bandit learning. While classical algorithms such as upper confidence bound methods and Thompson Sampling effectively balance this trade-off in stationary environments, their exploration strategies mainly reduce uncertainty about the current optimal arm, which can be insufficient in nonstationary settings where future optimal arms may differ substantially from current ones. In this paper, we propose Future Information-Directed Sampling (FIDS), a new algorithm for Bayesian nonstationary bandits that explicitly explores to gather information about future optimal arms. We show that FIDS achieves regret comparable to Thompson Sampling up to a small constant factor, while being able to exploit predictive information structures that conventional exploration objectives fail to capture. To address the practical difficulty of posterior inference, we further propose a supervised-learning-based approximation framework that learns the FIDS policy from offline data, and demonstrate its effectiveness on synthetic benchmarks.
\end{abstract}

\section{Introduction}\label{sec:intro}
In this paper, we study multi-armed bandits (MABs) in nonstationary environments. MAB problems have been extensively studied over the past decades as a fundamental framework for sequential decision making \cite{lai1985Asymptotically,lai1987Adaptive,lattimore2020bandit}, primarily under the assumption that the underlying environment is stationary. However, this assumption is often unrealistic in real-world applications such as recommendation systems \cite{clerici2023linear,pandey2007bandits} and online auctions \cite{kleinberg2003value}, where the environment may evolve over time. Motivated by these applications, we consider the nonstationary MAB setting \cite{garivier2011upper,cheung2019learning}, in which the reward distributions can change over time. We further adopt a Bayesian perspective, where the sequence of environments is generated according to a known prior distribution, as considered in \cite{mellor2013thompson,liu2023nonstationary,min2023information}. We refer to this setting as Bayesian nonstationary bandits.

Thompson Sampling (TS) \cite{thompson1933likelihood}, one of the most widely used algorithms for Bayesian bandit problems, has been analyzed in both stationary \cite{russo2018tutorial} and nonstationary settings \cite{min2023information}. Despite the differences between these settings, the analyses rely on a common key ingredient: for every round \(t\), the information ratio induced by TS can be shown to be upper bounded by a suitable quantity. The information ratio is defined as the ratio between the squared expected instantaneous regret and the mutual information gained from the next observation about the optimal arm at round (t). A formal definition is provided in Proposition~\ref{prop:generilized TS property}. This quantity captures the fundamental exploration--exploitation trade-off in bandit learning. Specifically, a small information ratio implies that either the algorithm incurs only a small instantaneous regret (exploitation), or it acquires a substantial amount of information about the optimal arm (exploration).

While this property is highly desirable in stationary environments, it is less clear whether the exploration--exploitation trade-off characterized by the information ratio remains appropriate in nonstationary settings. In particular, the exploration component rewards information acquisition about the optimal arm at the current round. However, in a nonstationary environment, the identity of the optimal arm may change over time. Consequently, information that is useful for identifying the current optimal arm may quickly become stale and contribute little to future decision making.

Motivated by this observation, we propose \emph{Future Information-Directed Sampling} (FIDS), a new algorithm that explicitly prioritizes gathering information about future optimal arms. At a high level, FIDS selects actions by solving an optimization problem that balances instantaneous regret against the mutual information between future optimal arms and the next observation. In this way, FIDS directs exploration toward information that is expected to remain valuable in the future, rather than solely toward identifying the currently optimal arm.

We establish theoretical guarantees for FIDS and demonstrate its empirical effectiveness through experiments. Our main contributions are summarized below.
\begin{enumerate}
    \item We prove in Theorem~\ref{thm:regret} that FIDS achieves the same regret upper bound as Thompson Sampling in Bayesian nonstationary environments, up to a small constant factor.
    
    \item Through several analytic examples, we demonstrate that FIDS can exploit predictive information structures that conventional exploration objectives fail to capture, leading to significantly improved performance over baseline methods.
    
    \item In practice, the prior distribution may be unknown and posterior inference may be computationally intractable. To sidestep these difficulties, we propose a supervised-learning framework that learns the FIDS policy with a truncated-window approximation from offline data, inspired by DPT \cite{lee2023supervised}, which similarly trains a Thompson Sampling policy via supervision. Empirically, the learned FIDS policy outperforms DPT and other baselines by effectively exploiting the information structure of the environment.

    \item In Section \ref{sec:full fids}, we present a new training method that removes the truncated-window approximation and thus recovers the full FIDS policy at optimality. We leave the empirical evaluation of this new training method for future work.
\end{enumerate}

\section{Preliminaries and Problem Formulation}\label{sec:problem formulation}
In this section we introduce the problem formulation, and provide background knowledge on Thompson Sampling and Information Directed Sampling.

We consider a finite MAB problem with horizon \(n\) and an action set \(\mathcal{A}\) with  \(|\mathcal{A}| = K\) arms. 
We model the non-stationary environment by a random variable \(\nu\) drawn from a prior distribution known to the learner. Here \(\nu = (\nu_{t,a})_{t \in [n], a \in \mathcal{A}}\) where each \(\nu_{t,a}\) specifies the reward distribution of arm \(a\) at round \(t\). Consider \(R_{t,a}\) to be the reward one will see if they play arm \(a\) at round \(t\). We assume that conditional on \(\nu_{t,a}\), the reward \(R_{t,a}\) is sampled independently as \(R_{t,a} \sim \nu_{t,a}\).  As a special case, if for every \(a \in \mathcal{A}\), we have \(\nu_{t,a} = \nu_{s,a}\) for all \(t,s \in [n]\) almost surely, then the model reduces to the standard stationary Bayesian bandit setting. 

In each round \(t\), the learner interacts with the environment by selecting an action \(A_t\) according to a policy \(\pi_t\), which maps the history \(\mathcal{F}_{t-1} = (A_1,R_1,\dots,A_{t-1},R_{t-1})\) to a distribution over \(\mathcal{A}\), and observes reward \(R_t \triangleq R_{t,A_t}\). 
Let \(\pi = (\pi_t)_{t=1}^n\). We evaluate \(\pi\) using the following Bayesian regret:
\begin{align}
\text{Regret}(\pi,n) 
= \mathbb{E}\left[\sum_{t=1}^n \big( R_{t,A_t^\star} - R_{t,A_t} \big)\right],
\end{align}
where \(A_t^\star \in \arg\max_{a \in \mathcal{A}} \mu(\nu_{t,a})\). And \(\mu(\nu_{t,a})\) denotes the mean of the distribution \(\nu_{t,a}\). We also make the following assumption on the reward distributions.

\begin{assumption}\label{subgaussian assumption}
    There exists a constant \(\sigma \in \mathbb{R}\), such that for all \(a \in \mathcal{A}\), conditioned on \(\mathcal{F}_t\), \(R_{t,a} - \mathbb{E}\left[R_{t,a} \mid \mathcal{F}_t\right]\) is \(\sigma\) sub-Gaussian. And we assume \(\sigma\) is known to the learner.
\end{assumption}



\noindent{\bf Notation.}
We introduce the following shorthand notations that will be used throughout the analysis. For \(i,j \in [n]\) with \(i \leq j\), let \(A^\star_{i:j} \triangleq \left[A^\star_i,\dots,A^\star_j\right]\).
Let \(\mathbb{P}_t(\cdot) \triangleq \mathbb{P}(\cdot \mid \mathcal{F}_{t-1})\) denote the posterior measure given the history up to round \(t-1\). For random variables \(X\) and \(Y\), we denote \(\mathbb{E}_t[X]\), \(H_t(X)\), \(H_t(X \mid Y)\), and \(I_t(X;Y)\) to be the expectation, entropy, conditional entropy, and mutual information under the posterior \(\mathbb{P}_t\), respectively. 
Formal definitions of these information-theoretic quantities can be found in Appendix~\ref{sec:information preliminaries}. Note that \(\mathbb{E}_t[X]\), \(H_t(X)\), \(H_t(X \mid Y)\), and \(I_t(X;Y)\) are all random variables, since they are functions of the history \(\mathcal{F}_{t-1}\) which is a random variable.

\subsection{Preliminaries about TS and IDS}\label{sec:preliminaries about TS}
We now recall a key result from the information-theoretic analysis of TS by \cite{russo2016information}, which underpins both the design of FIDS and its regret analysis.


\begin{proposition}{(Corollary 1 in \cite{russo2016information})}\label{prop:generilized TS property}
Denote the Thompson sampling policy by \(\pi^{\text{TS}} = \left(\pi_t^{\text{TS}}\right)_{t=1}^n\). Suppose that conditioned on \(\mathcal{F}_t\), \(R_{t,a} - \mathbb{E}\left[R_{t,a} \mid \mathcal{F}_t\right]\) is \(\sigma\) sub-Gaussian, where \(\sigma\) is a constant number in \(\mathbb{R}\). Then,
    \begin{align}\label{eq:info ratio def}
    \frac{\sum_{a \in \mathcal{A}} \pi_t^{\text{TS}}(a) \mathbb{E}_t\left[R_{t,A_t^\star} - R_{t,a}\right]}{\sqrt{\sum_{a \in \mathcal{A}} \pi_t^{\text{TS}}(a) I_t\left(A_t^\star; R_{t,a}\right)}} \leq \sigma\sqrt{2K},\quad \forall t\in [n], \text{ a.s.}
\end{align} 
\end{proposition}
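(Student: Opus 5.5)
The plan is the standard Russo--Van Roy argument, applied at a fixed round \(t\) under the posterior \(\mathbb{P}_t\). The two facts that make it work are that Thompson Sampling matches the posterior of the optimal arm, \(\pi_t^{\text{TS}}(a)=\mathbb{P}_t(A_t^\star=a)\), and that \(R_{t,a}\) is conditionally independent of \(A_t^\star\) given \(\nu\). For \(a,a'\in\mathcal{A}\), write \(p(a)=\mathbb{P}_t(A_t^\star=a)\), \(\mu_t(a)=\mathbb{E}_t[R_{t,a}]\) and \(\mu_t(a\mid a')=\mathbb{E}_t[R_{t,a}\mid A_t^\star=a']\).

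\emph{Step 1 (numerator).} The expected instantaneous regret of TS is
\[
\sum_a p(a)\,\mathbb{E}_t[R_{t,A_t^\star}-R_{t,a}] = \sum_a p(a)\bigl(\mu_t(a\mid a)-\mu_t(a)\bigr).
\]
This uses \(\mathbb{E}_t[R_{t,A_t^\star}]=\sum_a p(a)\mu_t(a\mid a)\) together with \(\pi^{\text{TS}}_t=p\).

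\emph{Step 2 (denominator).} Rewrite the mutual information as an averaged KL divergence:
\[
I_t(A_t^\star;R_{t,a})=\sum_{a'}p(a')\,D\bigl(\mathbb{P}_t(R_{t,a}\in\cdot\mid A_t^\star=a')\,\big\|\,\mathbb{P}_t(R_{t,a}\in\cdot)\bigr).
\]
Next, lower bound each KL term by the squared mean shift. By Assumption~\ref{subgaussian assumption}, \(R_{t,a}-\mu_t(a)\) is \(\sigma\)-sub-Gaussian under \(\mathbb{P}_t\). The Donsker--Varadhan variational formula, applied with test functions \(\lambda(R-\mu_t(a))\), then gives
\[
D\ge \frac{(\mu_t(a\mid a')-\mu_t(a))^2}{2\sigma^2}.
\]
Summing, the denominator satisfies
\[
\sum_a p(a)\,I_t(A_t^\star;R_{t,a}) \ge \frac{1}{2\sigma^2}\sum_{a,a'}p(a)\,p(a')\bigl(\mu_t(a\mid a')-\mu_t(a)\bigr)^2.
\]

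\emph{Step 3 (Cauchy--Schwarz).} Apply Cauchy--Schwarz to the numerator:
\[
\Bigl(\sum_a p(a)\bigl(\mu_t(a\mid a)-\mu_t(a)\bigr)\Bigr)^2 \le K\sum_a p(a)^2\bigl(\mu_t(a\mid a)-\mu_t(a)\bigr)^2 \le K\sum_{a,a'}p(a)\,p(a')\bigl(\mu_t(a\mid a')-\mu_t(a)\bigr)^2.
\]
The first inequality uses \(K\) summands. The second keeps only the diagonal terms \(a'=a\), which form a subset of the nonnegative double sum. Combining this with Step 2 bounds the squared ratio by \(2\sigma^2K\); taking square roots gives the claimed bound \(\sigma\sqrt{2K}\). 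The same argument works for every \(t\) and every realization of \(\mathcal{F}_{t-1}\), so the bound holds almost surely. Arms or conditioning events with \(p(a')=0\) simply drop out of the sums.

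The main obstacle is Step 2, the sub-Gaussian KL lower bound. One must check that sub-Gaussianity holds under the unconditional posterior \(\mathbb{P}_t\), which is what the assumption (conditioning on history) provides. Measurability of the conditional distributions also needs care. The remaining steps are algebra.
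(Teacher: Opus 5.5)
Your proof is correct and is exactly the Russo--Van Roy argument (mutual information written as an averaged KL divergence, the sub-Gaussian transportation lower bound on each KL term, and Cauchy--Schwarz over the \(K\) arms followed by enlarging the diagonal sum to the full double sum), which the paper imports by citation rather than reproving. Reading the hypothesis as sub-Gaussianity under \(\mathbb{P}_t=\mathbb{P}(\cdot\mid\mathcal{F}_{t-1})\) is the right interpretation of the paper's \(\mathcal{F}_t\), an indexing convention inherited from the source; also, the conditional independence of \(R_{t,a}\) and \(A_t^\star\) given \(\nu\) that you list as a key fact is never actually used in your steps.
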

The left-hand side of Equation~\eqref{eq:info ratio def} is known as the \emph{information ratio}. Using this bound, \cite{russo2016information} show that the Bayesian regret of TS is at most $\sigma\sqrt{2nKH(A^\star)}$, where $A^\star \triangleq A_1^\star = \cdots = A_n^\star$ since the environment is stationary.


In addition, IDS \cite{russo2014learning} selects a policy \(\pi_t^{\mathrm{IDS}}\) that explicitly minimizes the information ratio at each round \(t\), i.e.,
\begin{align}\label{eq:def of ids}
    \pi_t^{\text{IDS}} \in \argmin_{\pi_t \in \Delta\left(\mathcal{A}\right)} \frac{\sum_{a \in \mathcal{A}} \pi_t(a) \mathbb{E}_t\left[R_{t,A_t^\star} - R_{t,a}\right]}{\sqrt{\sum_{a \in \mathcal{A}} \pi_t(a) I_t\left(A_t^\star; R_{t,a}\right)}},
\end{align}
IDS enjoy the same worst-case regret bound as TS, but can achieve significantly improved performance in certain scenarios (see \cite{russo2014learning} for more details).

Using the same proof technique, \cite{min2023information} analyze TS in the nonstationary setting of Section~\ref{sec:problem formulation} and obtain a regret bound of $O(\sigma\sqrt{2nKH([A_1^\star,\dots,A_n^\star])})$. This subsumes the stationary result: when the environment is stationary, $A_1^\star = \cdots = A_n^\star = A^\star)$ almost surely, so $H([A_1^\star,\dots,A_n^\star]) = H(A^\star)$.


\section{FIDS Algorithm}\label{sec:algorithm}

In this section, we develop FIDS by addressing a key limitation of TS and IDS in nonstationary environments. In stationary settings, both algorithms achieve low regret by bounding the information ratio
\[\frac{\sum_{a \in \mathcal{A}} \pi_t(a),\mathbb{E}t[R_{t,A_t^\star} - R_{t,a}]}{\sqrt{\sum_{a \in \mathcal{A}} \pi_t(a),I_t(A_t^\star; R_{t,a})}},\]
which controls the trade-off between instantaneous regret and information gained about the optimal arm $A_t^\star$. This works because $A_t^\star$ is fixed, so reducing uncertainty about it directly improves future decisions. In nonstationary environments, however, $A_t^\star$ may vary with $t$, so information gained about the current optimum need not reduce uncertainty about future optimal arms. FIDS addresses this by replacing the per-round information term with one that targets the sequence of optimal arms.

To address the exploration-exploration problem in a meaningful way, we introduce \emph{Future Information-Directed Sampling} (FIDS), which balances instantaneous regret against information gained about future optimal arms. Under Assumption~\ref{subgaussian assumption}, FIDS is defined by the policy sequence $\pi^{\text{FIDS}} = (\pi_t^{\text{FIDS}})_{t=1}^n$, where $\pi_t^{\text{FIDS}}$ is any policy that solves the following optimization problem
\begin{align}\label{eq:definition of FIDS}
    \pi^{\text{FIDS}}_t \in \argmin_{\pi_t \in \Delta(\mathcal{A})} \sum_{a \in \mathcal{A}}\pi_t(a)\mathbb{E}_t[R_{t,A^\star_t}-R_{t,a}] - \sigma\sqrt{2K\cdot \sum_{a \in \mathcal{A}}\pi_t(a) I_t\left(A^\star_{t+1:n};R_{t,a}\right)}
\end{align}

Note that solving Equation~\eqref{eq:definition of FIDS} is equivalent to solving the following Equation~\eqref{eq:equivalent definition of FIDS} 
\begin{align}\label{eq:equivalent definition of FIDS}
    \pi^{\text{FIDS}}_t \in \argmax_{\pi_t \in \Delta(\mathcal{A})} \sum_{a \in \mathcal{A}}\pi_t(a)\mathbb{E}_t[R_{t,a}] + \sigma\sqrt{2K\cdot \sum_{a \in \mathcal{A}}\pi_t(a) I_t(A^\star_{t+1:,n};R_{t,a})},
\end{align}
as \(\sum_{a \in \mathcal{A}}\pi_t(a)\mathbb{E}_t\left[R_{t,A^\star_t}\right] = \mathbb{E}_t\left[R_{t,A^\star_t}\right]\) is a constant with respect to \(\pi_t\).

A useful property of Equation~\eqref{eq:equivalent definition of FIDS} is that the optimization in Equation~\eqref{eq:equivalent definition of FIDS} always admits an optimal policy supported on at most two arms, as shown in the following lemma (proof deferred to the appendix in section ~\ref{app:proof_lemma_2supp}).
\begin{lemma}\label{lemma:2_supp}
    Given \(\mathbb{E}_t\left[R_{t,a}\right]\) and \(I_t\left(\left[A^\star_{t+1},\dots,A^\star_n\right];R_{t,a}\right)\), Equation~\eqref{eq:equivalent definition of FIDS} is maximized at some \(\pi^\star\) which has at most 2 non-zero elements.
\end{lemma}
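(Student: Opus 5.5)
Write $\Delta_a \triangleq \mathbb{E}_t[R_{t,a}]$ and $g_a \triangleq I_t(A^\star_{t+1:n};R_{t,a}) \ge 0$. With these, the objective in Equation~\eqref{eq:equivalent definition of FIDS} reads $f(\pi) = \pi^\top \Delta + c\sqrt{\pi^\top g}$ with $c = \sigma\sqrt{2K}$. The plan is to reduce this concave maximization over the simplex to a linear program with one extra equality constraint, and then use the standard fact that vertices of such a polytope have at most two nonzero coordinates.

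First, the maximum is attained. The function $f$ is continuous on the compact set $\Delta(\mathcal{A})$, since $\sqrt{\cdot}$ is continuous on $[0,\infty)$. Let $\pi'$ be a maximizer and set $v \triangleq \pi'^\top g$.

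Next, consider the polytope
\[
P \triangleq \{\pi \in \Delta(\mathcal{A}) : \pi^\top g = v\}.
\]
Every $\pi \in P$ has $f(\pi) = \pi^\top \Delta + c\sqrt{v}$. This is linear in $\pi$ on $P$, and its value is at most $f(\pi')$ because $\pi'$ is a global maximizer. The set $P$ is nonempty (it contains $\pi'$) and compact, so the linear function $\pi \mapsto \pi^\top\Delta$ attains its maximum over $P$ at some vertex $\pi^\star$ of $P$. Since $\pi' \in P$, we get $f(\pi^\star) \ge f(\pi')$. Hence $\pi^\star$ is also a global maximizer of Equation~\eqref{eq:equivalent definition of FIDS}.

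The remaining step is to show that every vertex of $P$ has at most two nonzero entries. $P$ is defined in $\mathbb{R}^K$ by the two equality constraints $\mathbf{1}^\top\pi = 1$ and $g^\top\pi = v$, together with the nonnegativity constraints $\pi \ge 0$. At a vertex, at least $K$ linearly independent constraints must be active. At most two of these can be the equalities, so at least $K-2$ of the constraints $\pi_a \ge 0$ are tight, which leaves at most two nonzero coordinates.

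Equivalently, and more elementarily: suppose $\pi$ has support $S$ with $|S| \ge 3$. Then the system $\sum_{a\in S} d_a = 0$, $\sum_{a \in S} d_a g_a = 0$ in the variables $(d_a)_{a \in S}$ has a nonzero solution $d$. For small $\epsilon$, both $\pi \pm \epsilon d$ lie in $P$, so $\pi$ is not a vertex. Alternatively, moving along $d$ keeps $f$ linear, so one can move in the non-decreasing direction until some coordinate hits zero, strictly reducing the support. Iterating this reduction gives a maximizer with support of size at most two.

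The only delicate point is the first step. One might worry that the reduction requires strict concavity or uniqueness of the maximizer. It does not: fixing the level $v$ at the optimum turns the problem into a linear one without losing optimality, and the rest is standard linear-programming vertex counting.
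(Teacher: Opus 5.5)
Your proof is correct, and it takes a different route from the paper's.

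The paper argues through first-order optimality. At a maximizer $\pi^\star$, every arm in the support must have the same partial derivative $E_{t,a} + \Gamma g_{t,a}/\bigl(2\sqrt{\Gamma\, g_t\cdot\pi^\star}\bigr)$. Hence the points $(E_{t,a}, g_{t,a})$ of the supported arms lie on a common line. The optimal mixture is then replicated, with the same reward and information values, by mixing only the supported arms with the largest and smallest mean.

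You instead freeze the information level $v=\pi'^\top g$ of some maximizer and note that the objective is linear on the slice $P$. So a vertex of $P$ is also optimal, and vertex counting for a polytope cut out by two equalities plus nonnegativity gives at most two nonzero coordinates.

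What your route buys:
\begin{itemize}
\item It needs neither concavity nor differentiability.
\item It therefore handles without comment the degenerate case $g^\top\pi^\star=0$, where the paper's derivative formula becomes $0/0$. The memoryless example, in which every future-information term vanishes, is exactly this case.
\item It applies verbatim to any objective $\pi^\top E + h(\pi^\top g)$ that attains its maximum. With $m$ linear functionals inside $h$, it yields support size at most $m+1$.
\end{itemize}

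What the paper's route gives in exchange: it characterizes every maximizer through the equal-marginal condition. It also says explicitly which two arms of a given optimal support suffice, namely the supported arms with the highest and lowest expected reward.
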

Intuitively, the objective in \eqref{eq:equivalent definition of FIDS} is concave in $\pi_t$, so at any optimum every arm in the support must have the same marginal contribution. This equal-gradient condition forces a linear relationship between the reward and information values of the supported arms, and in the proof we show that any optimal mixture can be replicated by a two-point mixture of the arms  in the support.

Before proceeding with the regret bound of FIDS, we note a subtlety in applying IDS to the nonstationary setting.
\begin{remark}
In nonstationary environments, $\sum_{a \in \mathcal{A}} \pi_t(a), I_t(A_t^\star; R_{t,a})$ can equal zero: even if the current optimal arm $A_t^\star$ is known with certainty, the future optimal arms $A_{t+1}^\star, \dots, A_n^\star$ may remain unknown, so the problem is still nontrivial. Because the standard ratio form of IDS is undefined when this information term vanishes, we adopt the following additive variant for nonstationary environments:
\begin{align}\label{eq:definition of NIDS}
    \pi^{\text{IDS(ns)}}_t \in \argmax_{\pi_t \in \Delta(\mathcal{A})} \sum_{a \in \mathcal{A}}\pi_t(a)\mathbb{E}_t[R_{t,a}] + \sigma\sqrt{2K\cdot \sum_{a \in \mathcal{A}}\pi_t(a) I_t(A^\star_{t};R_{t,a})}
\end{align}
All experiments involving IDS use this formulation.
\end{remark}

We now show that FIDS achieves a regret bound comparable to that of TS in nonstationary environments.
\begin{theorem}\label{thm:regret}
Under Assumption~\ref{subgaussian assumption},
\begin{align}
\mathrm{Regret}(\pi^{\mathrm{FIDS}}, n) \leq 2\sigma\sqrt{2K \cdot n \cdot H([A_1^\star, \dots, A_n^\star])}
\end{align}
\end{theorem}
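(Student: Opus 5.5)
The plan is to bound the per-round regret of FIDS by comparing its objective with the Thompson Sampling policy $\pi^{\text{TS}}_t$, and then to sum the resulting terms with an entropy telescoping argument. Fix $t$ and write $\Delta_t(\pi)=\sum_a \pi(a)\mathbb{E}_t[R_{t,A_t^\star}-R_{t,a}]$. Also write $g^{\mathrm{fut}}_t(\pi)=\sum_a\pi(a)I_t(A^\star_{t+1:n};R_{t,a})$ and $g^{\mathrm{now}}_t(\pi)=\sum_a\pi(a)I_t(A^\star_t;R_{t,a})$.

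\textbf{Step 1 (per-round bound).} By optimality of $\pi^{\text{FIDS}}_t$ in \eqref{eq:definition of FIDS}, evaluating the objective at $\pi^{\text{TS}}_t$ gives
\[
\Delta_t(\pi^{\text{FIDS}}_t)\le \sigma\sqrt{2K g^{\mathrm{fut}}_t(\pi^{\text{FIDS}}_t)}+\Big[\Delta_t(\pi^{\text{TS}}_t)-\sigma\sqrt{2K g^{\mathrm{fut}}_t(\pi^{\text{TS}}_t)}\Big].
\]
Proposition~\ref{prop:generilized TS property} gives $\Delta_t(\pi^{\text{TS}}_t)\le\sigma\sqrt{2K g^{\mathrm{now}}_t(\pi^{\text{TS}}_t)}$. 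The chain rule and monotonicity of mutual information then give, for each $a$,
\[
I_t(A_t^\star;R_{t,a})\le I_t(A^\star_{t:n};R_{t,a})=I_t(A^\star_{t+1:n};R_{t,a})+I_t(A^\star_t;R_{t,a}\mid A^\star_{t+1:n}),
\]
and the last term is at most $H_t(A_t^\star\mid A^\star_{t+1:n})$. Averaging over $\pi^{\text{TS}}_t$ and using $\sqrt{x+y}\le\sqrt x+\sqrt y$, the bracket is at most $\sigma\sqrt{2K\,H_t(A_t^\star\mid A^\star_{t+1:n})}$. Hence
\[
\Delta_t(\pi^{\text{FIDS}}_t)\le \sigma\sqrt{2K x_t}+\sigma\sqrt{2K y_t},\qquad x_t=g^{\mathrm{fut}}_t(\pi^{\text{FIDS}}_t),\quad y_t=H_t(A_t^\star\mid A^\star_{t+1:n}).
\]
The point of this step is that the troublesome Thompson Sampling information term is replaced by a quantity that does not depend on which policy is played.

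\textbf{Step 2 (entropy potential).} I will show $\sum_{t}\mathbb{E}[x_t+y_t]\le H(A^\star_{1:n})$, with the potential $\Phi_t=\mathbb{E}[H_t(A^\star_{t:n})]$.

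First, the chain rule gives $H_t(A^\star_{t:n})=y_t+H_t(A^\star_{t+1:n})$.

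Next, under FIDS the action $A_t$ is drawn from $\pi^{\text{FIDS}}_t$ using randomness independent of $\nu$ given $\mathcal{F}_{t-1}$. Therefore $I_t(A^\star_{t+1:n};(A_t,R_t))=x_t$, and
\[
\mathbb{E}[H_{t+1}(A^\star_{t+1:n})]=\mathbb{E}[H_t(A^\star_{t+1:n})]-\mathbb{E}[x_t].
\]
Combining the two, $\Phi_{t+1}=\Phi_t-\mathbb{E}[x_t+y_t]$. Telescoping from $t=1$ to $n$ and using $\Phi_{n+1}\ge0$ gives $\sum_t\mathbb{E}[x_t+y_t]\le\Phi_1=H(A^\star_{1:n})$.

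\textbf{Step 3 (summation).} Summing Step 1 over $t$ and taking expectations, apply Cauchy--Schwarz and Jensen to each of the two sums separately:
\[
\sum_t\mathbb{E}\sqrt{x_t}\le\sqrt{n\sum_t\mathbb{E}x_t}\le\sqrt{nH(A^\star_{1:n})},
\]
and the same bound holds for $\sum_t\mathbb{E}\sqrt{y_t}$. This gives the claimed $2\sigma\sqrt{2KnH([A_1^\star,\dots,A_n^\star])}$.

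\textbf{Main obstacle.} The step needing the most care is Step 1. The Thompson Sampling information term $g^{\mathrm{now}}_t(\pi^{\text{TS}}_t)$ is computed for a policy that is never actually played, so it cannot be summed along the FIDS trajectory directly. The remedy is the chain-rule split, which moves that term into the policy-independent conditional entropy $H_t(A_t^\star\mid A^\star_{t+1:n})$.

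A secondary point is the sub-Gaussian hypothesis of Proposition~\ref{prop:generilized TS property}, which must hold at each posterior $\mathbb{P}_t$. This is exactly Assumption~\ref{subgaussian assumption}. Finally, the entropy identities in Step 2 are over the finite-valued vector $A^\star_{t:n}$, so all entropies involved are finite.
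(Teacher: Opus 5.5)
Your proof is correct. Its per-round step is essentially the paper's. The paper also compares against \(\pi^{\text{TS}}_t\) via optimality of \(\pi^{\text{FIDS}}_t\), invokes Proposition~\ref{prop:generilized TS property}, and reduces \(\sqrt{g_t(\pi^{\text{TS}}_t)}-\sqrt{\tilde g_t(\pi^{\text{TS}}_t)}\) to \(\sqrt{H_t(A^\star_t\mid A^\star_{t+1:n})}\). The only difference there is the tool: the paper uses \(\sqrt a-\sqrt b\le\sqrt{|a-b|}\) together with \(I(X;Y)-I(Z;Y)=I(X;Y\mid Z)-I(Z;Y\mid X)\), whereas you use monotonicity of mutual information plus \(\sqrt{x+y}\le\sqrt x+\sqrt y\).

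Where you genuinely diverge is the summation. The paper bounds the two sums separately, each against its own copy of the entropy budget:
\(\sum_t\mathbb{E}[x_t]\le I(A^\star_{1:n};\mathcal{F}_n)\le H(A^\star_{1:n})\) by the chain rule for mutual information, and
\(\sum_t\mathbb{E}[y_t]\le\sum_t H(A^\star_t\mid A^\star_{t+1:n})=H(A^\star_{1:n})\) after discarding the conditioning on \(\mathcal{F}_{t-1}\).
Your potential \(\Phi_t=\mathbb{E}[H_t(A^\star_{t:n})]\) instead shows that both quantities draw on a single shared budget: \(\sum_t\mathbb{E}[x_t+y_t]=H(A^\star_{1:n})\) exactly, with nothing discarded.

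The paper's route is more modular, since each lemma mirrors a standard step of the Thompson Sampling analysis. Yours is sharper. If you apply Cauchy--Schwarz jointly, using \(\sqrt{x_t}+\sqrt{y_t}\le\sqrt{2(x_t+y_t)}\), you get \(\mathrm{Regret}(\pi^{\mathrm{FIDS}},n)\le\sigma\sqrt{2K}\sqrt{2nH(A^\star_{1:n})}=2\sigma\sqrt{KnH(A^\star_{1:n})}\). That improves the stated constant by a factor \(\sqrt2\), leaving FIDS only a factor \(\sqrt2\) rather than \(2\) above the Thompson Sampling bound.
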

This bound matches the TS regret bound of \cite{min2023information} up to a constant factor of 2. Despite this worst-case similarity, FIDS can exploit additional structure in the environment; in Section~\ref{sec:examples}, we show that it achieves strictly better performance in several settings. Here is the proof of the theorem.
\begin{proof}
Here we outline the proof with several supporting lemmas. The proof of lemmas can be found in Section \ref{sec:supporting lemmas}.
For ease of the notation, in this section, we use \(\Delta_t\left(\pi_t\right) \coloneqq \sum_{a \in \mathcal{A}}\pi_t(a)\mathbb{E}_t[R_{t,A^\star_t}-R_{t,a}]\), \(\tilde{g}_t\left(\pi_t\right) \coloneqq \sum_{a \in \mathcal{A}}\pi_t(a)I_t([A^\star_{t+1},\dots, A^\star_n];R_{t,a})\) and \(g_t\left(\pi_t\right) \coloneqq \sum_{a \in \mathcal{A}}\pi_t(a)I_t(A^\star_t;R_{t,a})\). When an action \(A_t\) is sampled from \(\pi_t\), we slightly abuse the notation by letting \(\Delta_t\left(A_t\right) \coloneqq\mathbb{E}_t\left[R_{t,A_t^\star} - R_{t,A_t}\right], \tilde{g}_t\left(A_t\right) \coloneqq I_t\left(\left[A_{t+1}^\star,\dots,A_n^\star\right]; \left(A_t, R_{t,A_t}\right)\right)\) and \(g_t\left(A_t\right) \coloneqq I_t\left(A_{t}^\star; \left(A_t, R_{t,A_t}\right)\right)\), as Lemma \ref{lemma:transformation} shows that \(\Delta_t\left(A_t\right) = \Delta_t\left(\pi_t\right)\), \(\tilde{g}_t\left(A_t\right) =\tilde{g}_t\left(\pi_t\right)\), and \(g_t\left(A_t\right) = g_t\left(\pi_t\right)\).

The first part of the proof connects \(\pi^{\text{FIDS}}\) with \(\pi^{\text{TS}}\). By the Tower Rule, \(\text{Regret}\left(\pi^{\text{FIDS}},n\right) = \mathbb{E}\left[\sum_{t=1}^n \Delta_t\left(\pi^{\text{FIDS}}_t\right)\right]\), then by adding and subtracting \(\sigma\sqrt{2K\cdot \tilde{g}_t\left(\pi^{\text{FIDS}}_t\right)}\), we get
\begin{align*}
    \text{Regret}\left(\pi^{\text{FIDS}},n\right)
    &=\mathbb{E}\left[\sum_{t=1}^n\Delta_t\left(\pi^{\text{FIDS}}_t\right) - \sigma\sqrt{2K\cdot \tilde{g}_t\left(\pi^{\text{FIDS}}_t\right)} + \sigma\sqrt{2K\cdot \tilde{g}_t\left(\pi^{\text{FIDS}}_t\right)}\right]\\
    &\stackrel{(a)}{\leq} \mathbb{E}\left[\sum_{t=1}^n\Delta_t\left(\pi^{\text{TS}}_t\right) - \sigma\sqrt{2K\cdot \tilde{g}_t\left(\pi^{\text{TS}}_t\right)} + \sigma\sqrt{2K\cdot \tilde{g}_t\left(\pi^{\text{FIDS}}_t\right)}\right]\\
    &\stackrel{(b)}{=} \mathbb{E}\left[\sum_{t=1}^n\Delta_t\left(\pi^{\text{TS}}_t\right) - \sigma\sqrt{2K\cdot g_t\left(\pi^{\text{TS}}_t\right)} - \sigma\sqrt{2K\cdot \tilde{g}_t\left(\pi^{\text{TS}}_t\right)}\right.\\ &\left. \qquad+ \sigma\sqrt{2K\cdot \tilde{g}_t\left(\pi^{\text{FIDS}}_t\right)} + \sigma\sqrt{2K\cdot g_t\left(\pi^{\text{TS}}_t\right)}\right]\\
    &\stackrel{(c)}{\leq} \mathbb{E}\left[\sum_{t=1}^n \sigma\sqrt{2K\cdot g_t\left(\pi^{\text{TS}}_t\right)} - \sigma\sqrt{2K\cdot \tilde{g}_t\left(\pi^{\text{TS}}_t\right)} + \sigma\sqrt{2K\cdot \tilde{g}_t\left(\pi^{\text{FIDS}}_t\right)}\right]
\end{align*}
where (a) uses the fact that \(\pi^{\text{FIDS}}_t\) is the minimizer of Equation \eqref{eq:definition of FIDS}; (b) adds and subtracts \(\sigma\sqrt{2K\cdot g_t\left(\pi^{\text{TS}}_t\right)}\); (c) holds because of Proposition \ref{prop:generilized TS property}. Then by Lemma \ref{lemma:bound term 2}, 
\begin{align}
    \mathbb{E}\left[\sum_{t=1}^n \sigma\sqrt{2K\cdot \tilde{g}_t\left(\pi^{\text{FIDS}}_t\right)}\right] \leq \sigma\sqrt{2n\cdot K \cdot H(A^\star_{1:n})}\label{eq:middle step1}
\end{align}
 by Lemma \ref{lemma:sum term1}, 
\begin{align}
    \mathbb{E}\left[\sum_{t=1}^n \sigma\sqrt{2K\cdot g_t\left(\pi^{\text{TS}}_t\right)} - \sigma\sqrt{2K\cdot \tilde{g}_t\left(\pi^{\text{TS}}_t\right)}\right] \leq \sigma\sqrt{2n\cdot K \cdot H(A^\star_{1:n})}\label{eq:middle step2}
\end{align}

Combining Equation \eqref{eq:middle step1} and \eqref{eq:middle step2} proves the theorem.
\end{proof}

In the next subsection, we illustrate settings in which TS and IDS fail to exploit the information structure of the environment, while FIDS does not.


\subsection{Illustrative Examples: When Does Information Structure Matter?}\label{sec:examples}
We present two examples that illustrate the role of information structure. The first is an i.i.d.\ environment in which the optimal arm is redrawn independently each round, so observations carry no information about future optimal arms. The second features a change point where information about the future environment is encoded in a currently suboptimal arm; we complement the analytical results for this last example with numerical experiments.

\begin{example}[Memoryless environment]\label{ex:i.i.d. examples}
At each round \(t\), the environment \(\nu_t\) is sampled i.i.d. with \(K=2\), where \((\mu_{t,1}, \mu_{t,2})=(0.4,0.2)\) with probability \(1/2\) and \((\mu_{t,1}, \mu_{t,2})=(0.3,0.4)\) with probability \(1/2\). Conditional on \(\nu_t\), rewards satisfy \(R_{t,a}\sim\mathrm{Bernoulli}(\mu_{t,a})\) for \(a \in \{1,2\}\).
\end{example}

In this setting, exploration is futile: no observation at round $t$ can reduce uncertainty about future environments. As shown in Proposition~\ref{prop:iid examples}, FIDS recognizes this and always selects arm~1, the arm with the highest expected instantaneous reward, achieving cumulative reward $0.35n$. In contrast, TS and IDS allocate probability to unnecessary exploration, achieving only $0.325n$ and $0.3294n$ respectively.


Example \ref{ex:i.i.d. examples} serves as an environment where no future information can be gathered.
We next consider a setting where information about future optimal arms is encoded in a currently suboptimal arm, and FIDS is able to exploit this cross-environment structure.


\begin{example}[Magic arm environment]\label{ex:informative ex}
We consider a non-stationary 6-arm bandit with horizon \(n=100\) and a single change point at \(t=50\). 
The environment is \textbf{Env 1} for \(t \le 50\) and \textbf{Env 2} for \(t > 50\), and is stationary within each segment.

\begin{itemize}
    \item[{\texttt{Env 1}}.]
    Let \(Z_1 \sim \mathrm{Unif}(\{1,\dots,5\})\). 
For \(a \in \{1,\dots,5\}\), \(\mu^1_a = 0.9\) if \(a=Z_1\) and \(\mu^1_a = 0.4\) otherwise, with \(R_{t,a} \sim \mathcal{N}(\mu^1_a, 0.1^2)\). 
For \(a=6\), \(\mu^1_6 = Z_2/7\) and \(R_{t,6} \sim \mathcal{N}(\mu^1_6, 0.1^2)\).

    \item[{\texttt{Env 2}}.]
    Let \(Z_2 \sim \mathrm{Unif}(\{1,\dots,6\})\), independent of \(Z_1\). 
For all \(a \in \mathcal{A}\), \(\mu^2_a = 3.0\) if \(a=Z_2\) and \(\mu^2_a = 0.5\) otherwise, with \(R_{t,a} \sim \mathcal{N}(\mu^2_a, 0.5^2)\).
\end{itemize}
\end{example}

 Note that the information about \texttt{Env 2} is encoded in the mean reward of arm \(6\) in \texttt{Env 1}. In particular, if \(\mu_{1,6}\) is known exactly, then the means of all arms in \texttt{Env 2} are immediately determined. FIDS is able to exploit this structure through its objective of gathering information about future optimal arms. In contrast, TS and IDS are designed to gather information only about the currently optimal arm. Since arm \(6\) is never optimal in \texttt{Env 1}, these methods fail to actively explore it and therefore cannot effectively exploit the information it provides about \texttt{Env 2}.

\begin{figure}[t]
    \centering
    
    \begin{subfigure}[t]{0.32\linewidth}
        \centering
        \includegraphics[width=\linewidth]{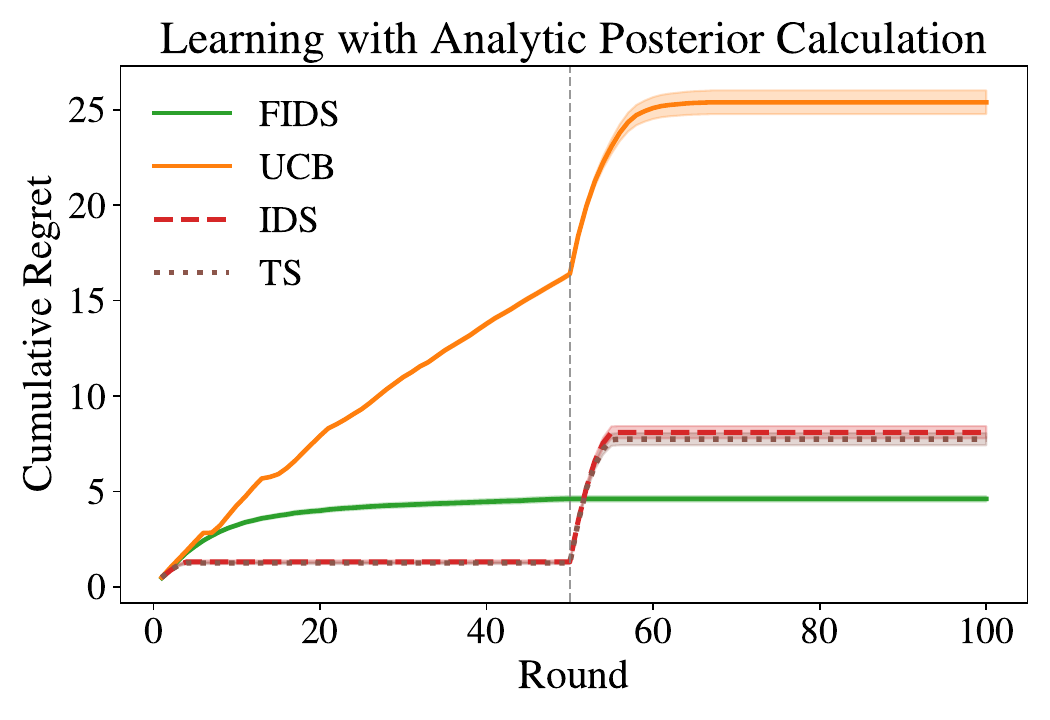}
        \caption{}
        \label{fig:analytic_a}
    \end{subfigure}
    \hfill
    \begin{subfigure}[t]{0.32\linewidth}
        \centering
        \includegraphics[width=\linewidth]{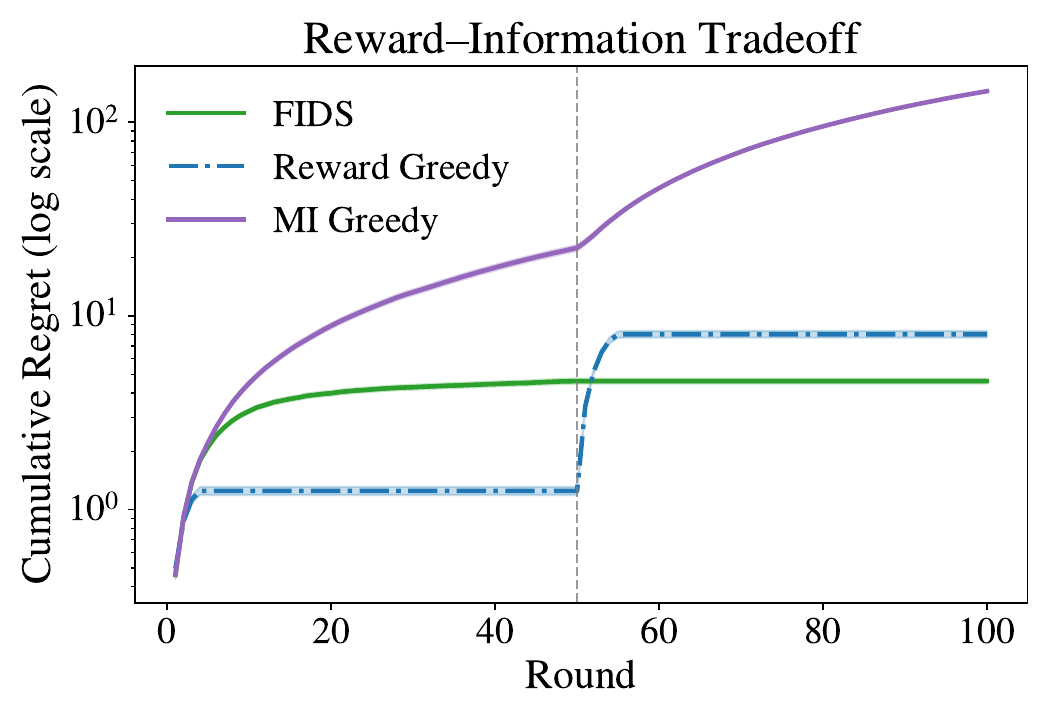}
        \caption{}
        \label{fig:analytic_b}
    \end{subfigure}
    \hfill
    \begin{subfigure}[t]{0.32\linewidth}
        \centering
        \includegraphics[width=\linewidth]{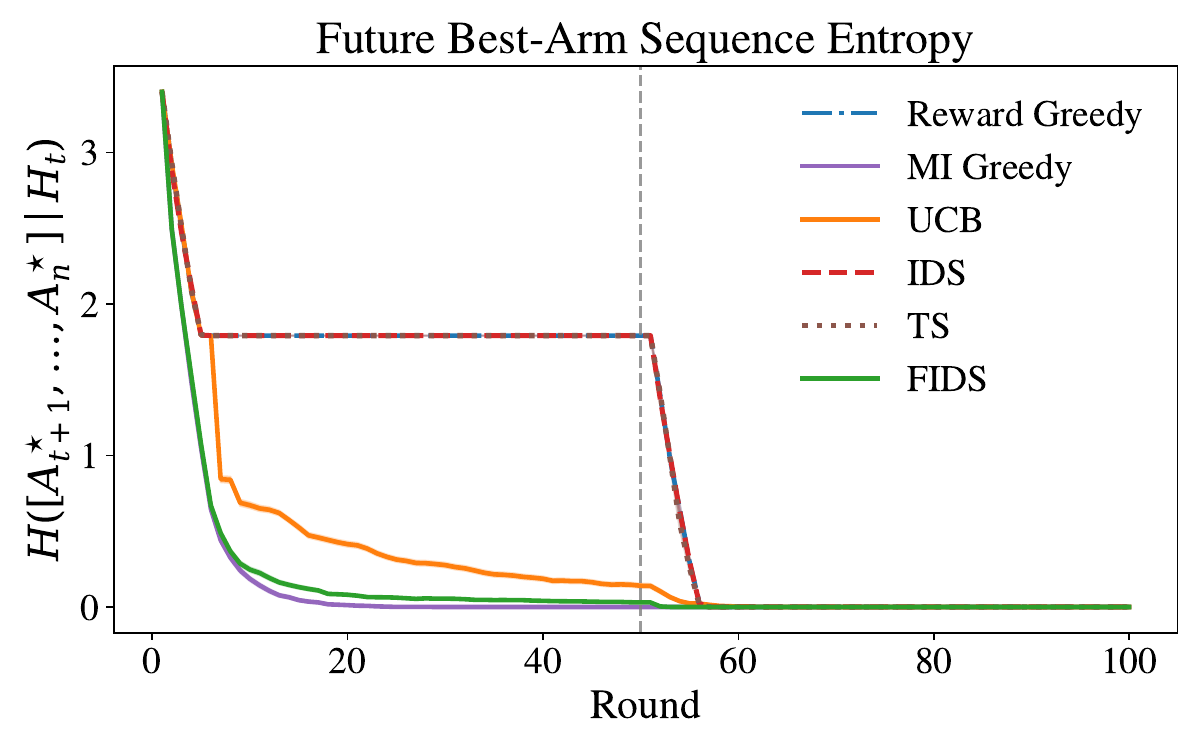}
        \caption{}
        \label{fig:analytic_c}
    \end{subfigure}
    
    \caption{(a) Cumulative regret of FIDS, IDS and TS. (b) Cumulative regret (log scale) of FIDS, Reward Greedy (only optimize for the first reward term of Equation \eqref{eq:equivalent definition of FIDS}), and MI only (only optimize for the second mutual information term of Equation \eqref{eq:equivalent definition of FIDS}). (c) The entropy of \(\left[A^\star_{t+1},\dots, A^\star_n\right]\) conditioned on the history.}
    \label{fig:analytic_env}
    
    \vspace{-10pt}
\end{figure}

\noindent{\bf Numerical results on the magic arm environment.}
We evaluate FIDS, UCB \cite{auer2002finite}, IDS and TS on the magic arm environment.
The results are shown in Figure~\ref{fig:analytic_env}. In Figure~\ref{fig:analytic_a}, compared with the other methods, FIDS spends more rounds identifying the optimal arm in \emph{Env 1}, since it allocates part of its sampling probability to the informative but suboptimal arm \(6\). However, FIDS almost immediately identifies the optimal arm in \emph{Env 2}, resulting in nearly zero regret during the second half of the horizon. This phenomenon is also reflected in Figure~\ref{fig:analytic_c}: by pulling arm \(6\) in \emph{Env 1}, FIDS rapidly reduces the entropy of the optimal arms in future rounds, rather than only the entropy of the optimal arm in the current environment.

In Figure~\ref{fig:analytic_b}, we additionally compare FIDS with two ablated policies: a ``pure exploitation'' policy that optimizes only the reward term in Equation~\eqref{eq:equivalent definition of FIDS}, and a ``pure exploration'' policy that optimizes only the mutual information term. FIDS outperforms both policies by effectively balancing exploration and exploitation.

\section{Inferring FIDS policy from Data via Supervised Learning}\label{sec:SL alg}

Computing $\pi^{\mathrm{FIDS}}$ exactly is often impractical: the prior over environments may be unknown, and even when it is known, evaluating Equation~\eqref{eq:equivalent definition of FIDS} requires posterior inference that is generally intractable. A further difficulty is that the mutual information $I_t(A^\star_{t+1:n}; R_{t,a})$ involves the full sequence of future optimal arms, whose support grows exponentially with the horizon.


To address these challenges, we propose a supervised-learning approach (Algorithm~\ref{alg:fids-offline}). Given an offline dataset of environment instances, we train models to estimate the two quantities needed by FIDS: the expected reward $\mathbb{E}_t[R_{t,a}]$ and a \emph{windowed} approximation $I_t(A^\star_{t+1:t+k}; R_{t,a})$, where $k$  is a fixed lookahead horizon that controls the trade-off between fidelity and computational cost. At deployment, these estimates are plugged into Equation~\eqref{eq:equivalent definition of FIDS} to compute $\pi^{\text{FIDS}}$. We describe the training and deployment phases in detail below.



\begin{algorithm}[t]\label{alg:FIDS_offline}
\caption{Infer \(\pi^{\text{FIDS}}\) from Dataset \(\mathcal{D}\) with prediction window length \(k\): Training and Deployment}
\label{alg:fids-offline}
\footnotesize
\Comment{Training phase - supervised learning with offline dataset \(\mathcal{D}\)}

Construct training samples \(\{z_t^\xi\}_{t=1}^{n}\) from each data instance \(\xi \in \mathcal{D}\); initialize the model \(M_\theta\) with parameter \(\theta\).

\For{\(t = 1\) \KwTo \(N_{\text{epochs}}\)}{
Calculate loss \eqref{eq:training loss} and do back-propagation to update \(\theta\)
}
Output the trained model \(M_\theta\)

\Comment{Deployment phase}

\For{\(t = 1\) \KwTo \(n\)}{

     For each action $a$ compute  \(\widehat{\mathbb{E}}_{t}(R_{t,a})\) and \(\widehat{I}_t(A_{t+1:t+k}^\star;R_{t,a})\) using Eqs. \eqref{eq:estimated reward mean}-\eqref{eq:estimated MI}
     
Solve

    \[
\begin{aligned}
(i^\star, j^\star, \alpha^\star) = \argmax_{\substack{i,j \in \mathcal{A}, \alpha \in [0,1]}}
\Bigl\{
&\alpha \widehat{\mathbb{E}}_{t}(R_{t,i})
+ (1-\alpha)\widehat{\mathbb{E}}_{t}(R_{t,j})
\\
&+
\sigma\sqrt{
2\sigma K \Bigl(
\alpha \widehat{I}_t(A_{t+1:t+k}^\star;R_{t,i})
+
(1-\alpha)\widehat{I}_t(A_{t+1:t+k}^\star;R_{t,j})
\Bigr)
}
\Bigr\}.
\end{aligned}
\]

Output \(\pi_t^{\text{FIDS}}(i^\star) = \alpha^\star\), \(\pi_t^{\text{FIDS}}(j^\star) = 1 - \alpha^\star\), and \(\pi_t^{\text{FIDS}}(a) = 0\) for all \(a \notin \{i^\star, j^\star\}\).

}

    
\end{algorithm}


\subsection{On the Training Phase}
Algorithm~\ref{alg:fids-offline} takes an offline dataset \(\mathcal{D}\) as input. 
Each data instance \(\xi \in \mathcal{D}\) is generated by first sampling an environment \(\nu\) from the prior, and then collecting:
(i) the full reward table of \(\nu\), \(\{r_{t,a}^\nu\}_{t \in [n],\, a \in \mathcal{A}}\), 
(ii) the best-arm sequence of \(\nu\), \((a_1^{\star,\nu}, \dots, a_n^{\star,\nu})\), and 
(iii) an action sequence \((a_1, \dots, a_n)\) generated by some behavior policy.

For each \(\xi \in \mathcal{D}\) and \(t \in [n]\), we construct one training sample: \(z_t^\xi := \left(h_t^\xi,\; a_{t:t+k}^\star,\; \{r_{t,a}^\xi\}_{a \in \mathcal{A}}\right)\), where \(h_t^\xi := (a_1, r_1^\nu, \dots, a_{t-1}, r_{t-1}^\nu)\) denotes the observed history up to round \(t-1\), and \(r_s^\nu := r_{s,a_s}^\nu\) for \(s \in [t]\). \(a_{t:t+k}^{\star,\xi} := (a_{t}^{\star,\nu}, \dots, a_{t+k}^{\star,\nu})\) denotes the future best-arm sequence.

Our goal is to estimate \(\mathbb{E}_t[R_{t,a}]\) and 
\(I_t(A_{t+1:t+k}^\star; R_{t,a})\). 
To this end, we approximate the following conditional distributions: \(\mathbb{P}_t(R_{t,a} \in \cdot)\) for all \(a \in \mathcal{A}\), \(\mathbb{P}_t(A_{t+1:t+k}^\star \in \cdot)\), \(\mathbb{P}_t(A_{t:t+k-1}^\star \in \cdot)\). In practice, we model the reward posterior by a Gaussian, and the best-arm posteriors by categorical distributions. To learn these models, we use a sequential architecture, specifically GPT-2 \cite{radford2019language}. The architecture has three heads, and outputs:
\[
M_\theta^{\mathrm{rew}}(\cdot \mid h_t^\xi, a), \quad 
M_\theta^{\mathrm{fut}+}(\cdot \mid h_t^\xi), \quad 
M_\theta^{\mathrm{fut}}(\cdot \mid h_t^\xi),
\]
which model the conditional distributions of \(R_{t,a}\), \(A_{t+1:t+k}^\star\), and \(A_{t:t+k-1}^\star\), respectively.
 Then, to train the reward posterior, we simply use negative log-likelihood, while the best-arm posteriors are trained via cross-entropy loss. The overall training objective is
\begin{align}
\mathcal{L}_\theta 
= \sum_{\xi \in \mathcal{D}} \sum_{t=1}^{n} 
\Bigg[\sum_{a \in \mathcal{A}} 
\mathrm{NLL}\big(M_\theta^{\mathrm{rew}}(\cdot \mid h_t^\xi, a), r_{t,a}^\xi\big)
+ \, \mathrm{CE}\big(M_\theta^{\mathrm{fut}+}(\cdot \mid h_t^\xi), a_{t+1:t+k}^\star\big)\notag
\\
\qquad
+ \, \mathrm{CE}\big(M_\theta^{\mathrm{fut}}(\cdot \mid h_t^\xi), a_{t:t+k-1}^\star\big)
\Bigg]. \label{eq:training loss}
\end{align}

\subsection{On the Deployment Phase}
At deployment, we use the inferred posteriors to compute FIDS, which relies on \(\mathbb{E}_t\left[R_{t,a}\right]\) and \(I_t\left(A^\star_{t+1:t+k};R_{t,a}\right)\).
Given a history \(h_t\), we first estimate the conditional reward mean by the mean of the learned reward posterior:
\begin{align}
    \widehat{\mathbb{E}}_{t}\left[R_{t,a}\right]
=
\mathbb{E}_{\widetilde{R}_{t,a} \sim M_\theta^{\mathrm{rew}}(\cdot \mid h_t,a)}
\left[\widetilde{R}_{t,a}\right].\label{eq:estimated reward mean}
\end{align}
Since \(M_\theta^{\mathrm{rew}}(\cdot \mid h_t,a)\) is Gaussian, this is simply the predicted Gaussian mean.
We next estimate the conditional mutual information via an entropy decomposition
\[
I_t(A_{t+1:t+k}^\star;R_{t,a})=
H_t(A_{t+1:t+k}^\star)
-
H_t(A_{t+1:t+k}^\star \mid R_{t,a}).
\]
The first term is estimated directly from the future head: \(\widehat{H}_t(A_{t+1:t+k}^\star)
=
H\!\left(M_\theta^{\mathrm{fut}+}(\cdot \mid h_t)\right)\).
For the conditional entropy term, we draw \(L\) Monte Carlo samples: \(\widetilde{r}_{t,a}^{(1)},\dots,\widetilde{r}_{t,a}^{(L)}
\sim
M_\theta^{\mathrm{rew}}(\cdot \mid h_t,a)\).
For each sampled reward \(\widetilde{r}_{t,a}^{(\ell)}\), we form the hypothetical updated history \(\widetilde{h}_{t+1}^{(\ell,a)}
=
(h_t, a, \widetilde{r}_{t,a}^{(\ell)})\). 
Since \(M_\theta^{\mathrm{fut}}(\cdot \mid \widetilde{h}_{t+1}^{(\ell,a)})\) models the posterior distribution of \(A_{t+1:t+k}^\star\) after observing action \(a\) and reward \(\widetilde{r}_{t,a}^{(\ell)}\), we estimate
\[
\widehat{H}_t(A_{t+1:t+k}^\star \mid R_{t,a})
=
\frac{1}{L}\sum_{\ell=1}^L
H\!\left(
M_\theta^{\mathrm{fut}}(\cdot \mid \widetilde{h}_{t+1}^{(\ell,a)})
\right).
\]
Therefore, the plug-in Monte Carlo estimator of the conditional mutual information is
\begin{align}
    \widehat{I}_t(A_{t+1:t+k}^\star;R_{t,a})
=
H\!\left(M_\theta^{\mathrm{fut}+}(\cdot \mid h_t)\right)
-
\frac{1}{L}\sum_{\ell=1}^L
H\!\left(
M_\theta^{\mathrm{fut}}(\cdot \mid \widetilde{h}_{t+1}^{(\ell,a)})
\right).\label{eq:estimated MI}
\end{align}
We choose \(L=16\) for all the experiments in Section \ref{sec:experiments}.

\paragraph{Solving Equation \eqref{eq:equivalent definition of FIDS}.}
With the learned \(\mathbb{E}_t\left[R_{t,a}\right]\) and \(I_t\left(\left[A^\star_{t+1},\dots,A^\star_{t+k}\right];R_{t,a}\right)\), FIDS policy can be obtained by just plugging in the learned two quantities into Eq.~\eqref{eq:equivalent definition of FIDS} and solve the corresponding optimization problem. With Lemma \ref{lemma:2_supp}, for the plugged-in \(\widehat{\mathbb{E}}_{t}(R_{t,a})\) and \(\widehat{I}_t(A_{t+1:n}^\star;R_{t,a})\), one can solve Eq. \eqref{eq:equivalent definition of FIDS} by first enumerating all pairs of arms and using line search to find the optimal weights assigned on the arms. Details are presented in Alg. \ref{alg:fids-offline}.

\section{Experiments}\label{sec:experiments}
We evaluate the FIDS policy learned via Algorithm~\ref{alg:fids-offline} on two environment classes for which exact posterior inference is intractable: the \emph{One-Step Predictive} environment and the \emph{Pair-Revealing} environment. In both environments, information about future optimal arms is encoded in the rewards of a currently suboptimal arm: the first reveals the next-round optimum exactly, while the second only narrows the future best arm to a pair. Further details are provided in the next subsections. Throughout, we set $\sigma = 0.7$ in Equation~\eqref{eq:equivalent definition of FIDS}, which is a valid sub-Gaussian parameter by Lemma~\ref{lemma:env2 subg}.


\noindent{\bf Comparison.}
We compare the following methods: (i) FIDS with prediction window length \(k=5\); (ii) FIDS with \(k=3,1\) (used only in the second Pair-Revealing Environment); (iii) DPT; and (iv) IDS. We train IDS using the same procedure as Algorithm~\ref{alg:fids-offline}. The only difference is that for IDS we estimate \(I_t(A_t^\star; R_{t,a})\) instead of \(I_t(A^\star_{t+1:t+k}; R_{t,a})\).

\subsection{One-Step Predictive Environment}\label{subsec:onestep_predictive_env}
\noindent{\bf Environment Description} (Next round best arm encoding).
We consider a nonstationary 3-arm bandit with horizon $n=100$, divided into 50 independent windows of length 2. For each window $j \in \{1,\dots,50\}$, corresponding to rounds $2j{-}1$ and $2j$, we independently sample $m_j \sim \mathrm{Unif}(\{1,2,3\})$.

At round $2j{-}1$, we sample $Z_j \sim \mathrm{Unif}(\{1,2\})$. Arms 1 and 2 have rewards $\mu_{2j-1,a} = \mathbf{1}[a = Z_j]$, while arm 3 has reward $\mu_{2j-1,3} = 1/(2m_j)$. At round $2j$, the best arm is determined by $m_j$: $\mu_{2j,a} = \mathbf{1}[a = m_j]$ for all $a \in \{1,2,3\}$. All rewards are deterministic.

The key feature is that arm 3's reward in odd rounds encodes the identity of the best arm in the following even round, creating a one-step-ahead information channel that a forward-looking policy can exploit.





\begin{figure}[t]
\centering
\begin{subfigure}[t]{0.4\linewidth}
    \centering
    \includegraphics[width=\linewidth]{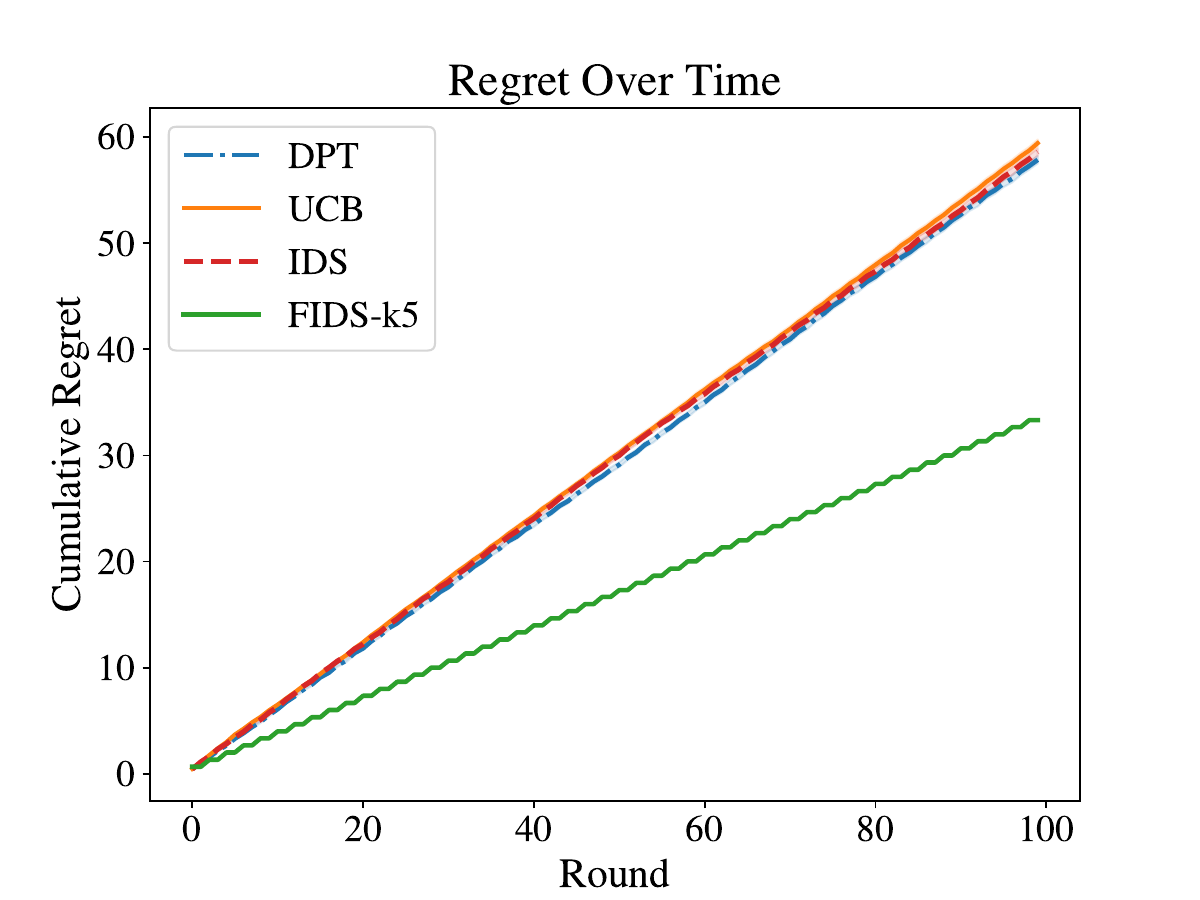}
    \caption{}
    \label{fig:onepredict cum regret}
\end{subfigure}%
\hfill
\begin{subfigure}[t]{0.4\linewidth}
    \centering
    \includegraphics[width=\linewidth]{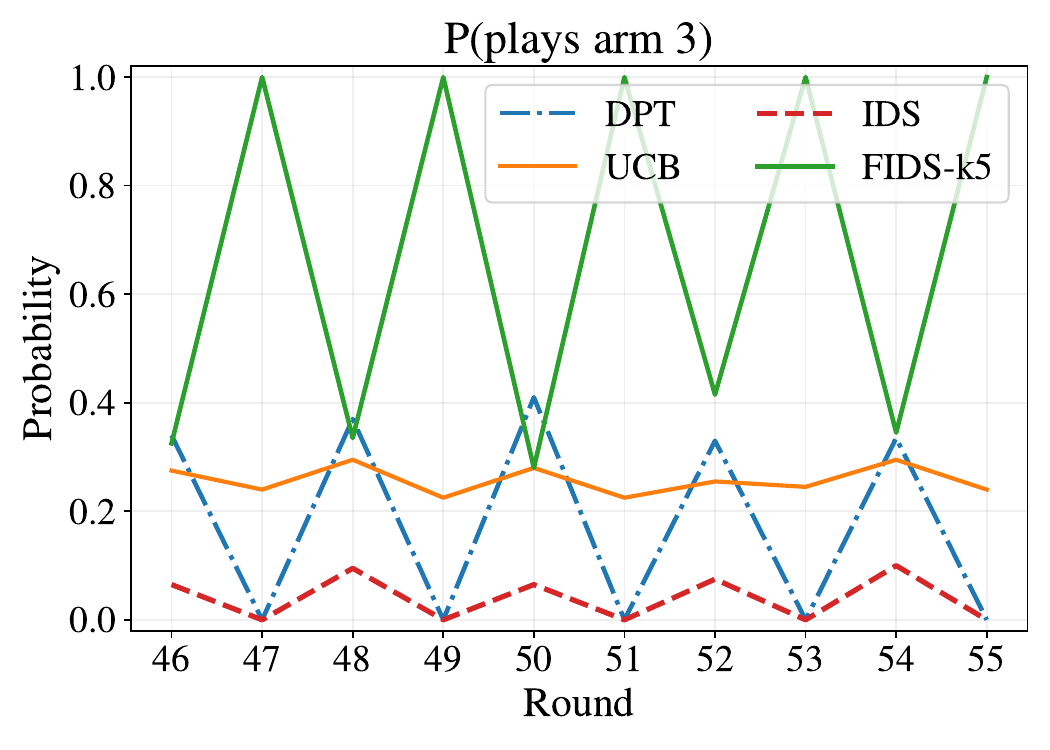}
    \caption{}
    \label{fig:onepredict seg mag}
\end{subfigure}

    \caption{(a) Cumulative Regret of DPT, IDS, UCB and FIDS-k5 in One Step Predictive Environment. (b) The probability of choosing arm 3 in each round. }
    \label{fig:five}
    \vspace{-15pt}
\end{figure}

\noindent{\bf Results.}
Figures~\ref{fig:onepredict cum regret} and~\ref{fig:onepredict seg mag}
summarize the main results (additional plots in
Figures~\ref{fig:onepredict mag}--\ref{fig:onepredict seg best} in the
appendix). Although the windows are independent, each contains a
one-step information channel: pulling arm $3$ in the first round
reveals the optimal arm in the second round. The learned FIDS policy
recovers exactly this behavior: it selects arm $3$ in the first round
of each window and the true optimal arm in the second
(Figures~\ref{fig:onepredict mag},~\ref{fig:onepredict seg mag},
and~\ref{fig:onepredict seg best}). In contrast, TS and IDS never
learn to exploit arm $3$, since their exploration targets only the
current optimal arm.



\subsection{Pair-Revealing Environment}

\begin{figure}[b]
\vspace{-5pt}
    \centering
    
    \begin{subfigure}[t]{0.32\linewidth}
        \centering
        \includegraphics[width=\linewidth]{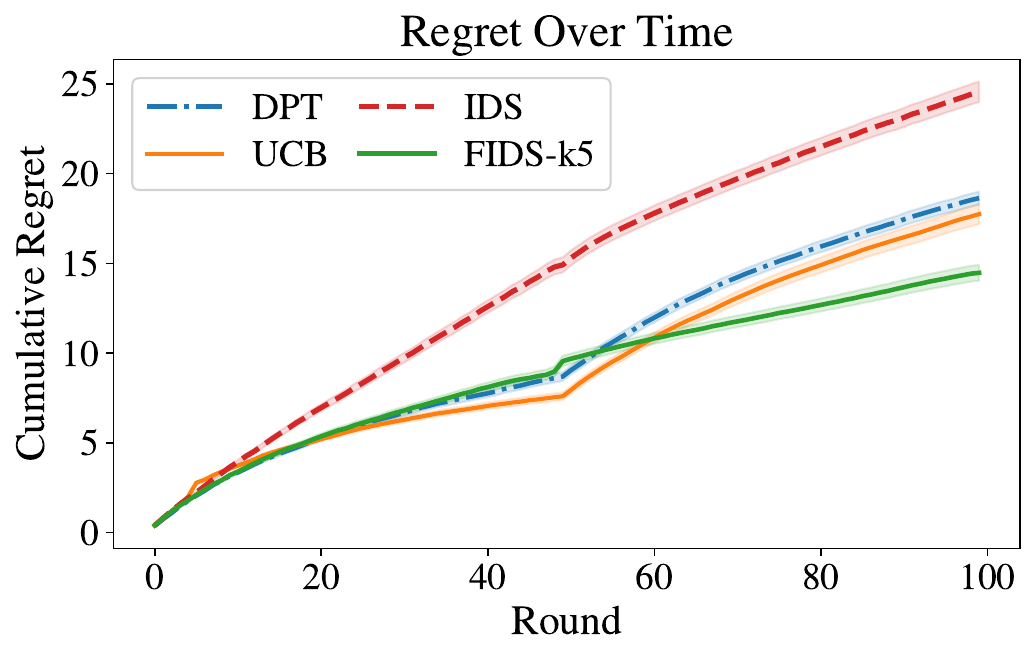}
        \caption{}
        \label{fig:top2_0.0_a}
    \end{subfigure}
    \hfill
    \begin{subfigure}[t]{0.32\linewidth}
        \centering
        \includegraphics[width=\linewidth]{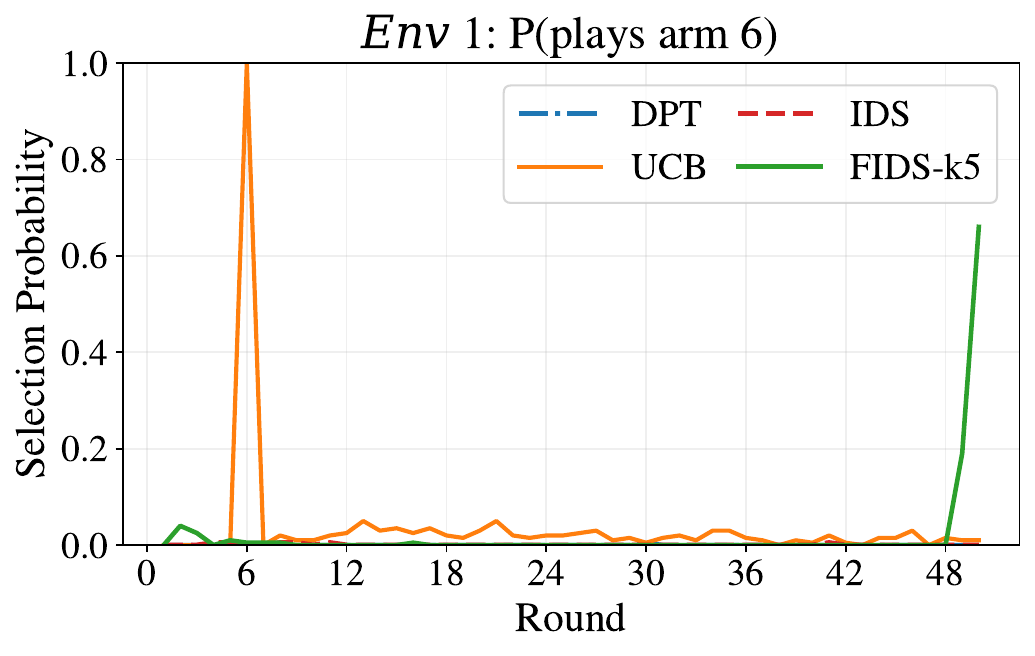}
        \caption{}
        \label{fig:top2_0.0_b}
    \end{subfigure}
    \hfill
    \begin{subfigure}[t]{0.32\linewidth}
        \centering
        \includegraphics[width=\linewidth]{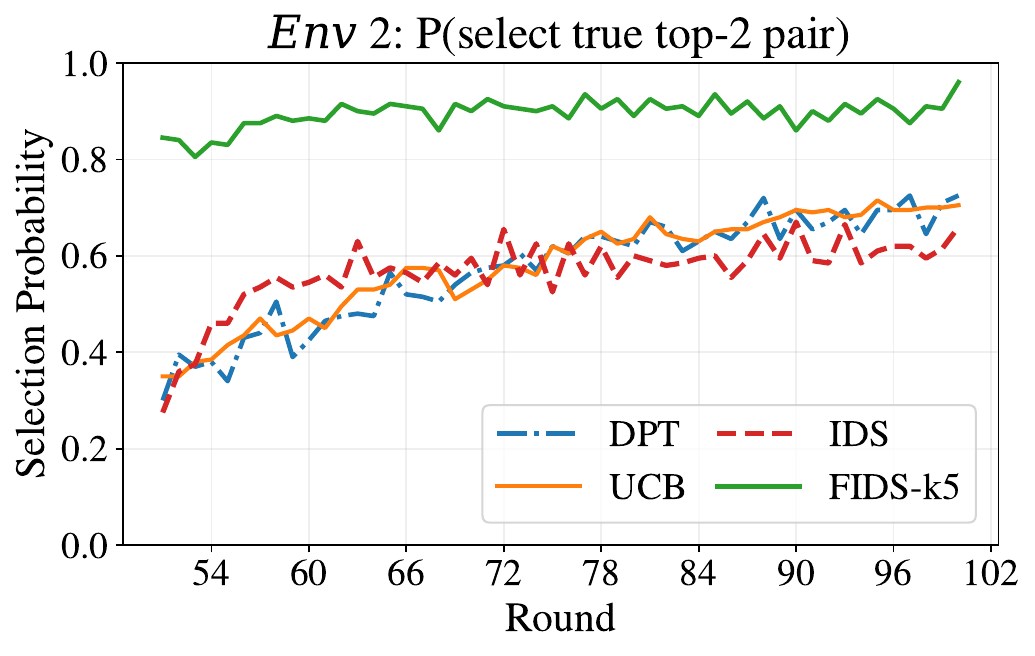}
        \caption{}
        \label{fig:top2_0.0_c}
    \end{subfigure}
    
    \caption{Evaluation Results for the Pair-Revealing Environment with \(\sigma_6^2=0.0\). (a) Cumulative regret.  (b) The probability of play arm 6 at each round. (c) The probability of selecting an arm that belongs to the true top-2 set of the underlying environment.}
    \label{fig:env2_0.0}
\end{figure}

\noindent{\bf Environment Description.}
Unlike the previous environment, where the informative arm directly identifies the future optimum, here it only reveals the top-2 arm pair in the future environment.
We consider a nonstationary 6-arm bandit with horizon $n=100$ and a single change point at $t=50$. The environment is \texttt{Env 1} for $t \le 50$ and \texttt{Env 2} for $t > 50$, stationary within each segment.
\begin{itemize}
    \item In \texttt{Env 1}, let $Z_1 \sim \mathrm{Unif}(\{1,\dots,5\})$. For $a \in \{1,\dots,5\}$, $\mu_a^1 = 1$ if $a = Z_1$ and $\mu_a^1 = U_a$ otherwise, where $U_a \overset{\mathrm{i.i.d.}}{\sim} \mathrm{Unif}([0,1])$, with $R_{t,a} \sim \mathcal{N}(\mu_a^1, 0.5^2)$. Arm 6 has mean $\mu_6^1 = m/30$, where $m \sim \mathrm{Unif}(\{0,\dots,14\})$ indexes one of the $\binom{6}{2}=15$ unordered arm pairs; we denote this pair by $\mathcal{S}(m)$. We consider two noise settings for arm 6: $\sigma_6^2 = 0$ (exact observation) and $\sigma_6^2 = 0.05^2$ (noisy observation).

\item In \texttt{Env 2}, we sample $V_1,\dots,V_6 \overset{\mathrm{i.i.d.}}{\sim} \mathrm{Unif}([0,1])$ and assign the two largest values to the arms in $\mathcal{S}(m)$, with the remaining values randomly permuted among the other arms, yielding means $\{\mu_a^2\}$. Rewards satisfy $R_{t,a} \sim \mathcal{N}(\mu_a^2, 0.5^2)$.
\end{itemize}

The information structure is that arm 6 in \texttt{Env 1}, while never optimal, encodes which pair of arms will dominate in \texttt{Env 2}. The two noise settings let us examine how the precision of this signal affects the benefit of future information exploration.

\noindent{\bf Results.} 
We evaluate under two noise settings for arm $6$.

\begin{itemize}
    \item ($\sigma_6^2 = 0$).
We set the prediction window to $k=5$. Figure~\ref{fig:top2_0.0_a}
shows that FIDS matches the baselines in \texttt{Env 1}, while
intentionally pulling arm $6$ (Figure~\ref{fig:top2_0.0_b}) to gather
information about \texttt{Env 2}. Upon entering \texttt{Env 2}, FIDS
identifies the candidate top-2 arms and focuses exploration on them
rather than the full arm set.

\item ($\sigma_6^2 = 0.05^2$).
When arm $6$'s reward is noisy, recovering $\mu_6^1$ requires multiple
pulls, making a longer prediction window more valuable. We compare
FIDS with $k=5$ (FIDS-K5) and a shorter window. As shown in
Figures~\ref{fig:top2_0.05_b} and~\ref{fig:top2_0.05_c}, pulling arm
$6$ earlier in \texttt{Env 1} allows FIDS-K5 to infer the top-2 pair
more accurately, leading to faster adaptation after the change point.

\end{itemize}

Across both environments, the learned FIDS policy consistently
exploits informative but suboptimal arms that TS and IDS ignore,
confirming that Algorithm~\ref{alg:fids-offline} successfully recovers
future information exploration behavior from  data.



\begin{figure}[t]
    \centering
    
    \begin{subfigure}[t]{0.32\linewidth}
        \centering
        \includegraphics[width=\linewidth]{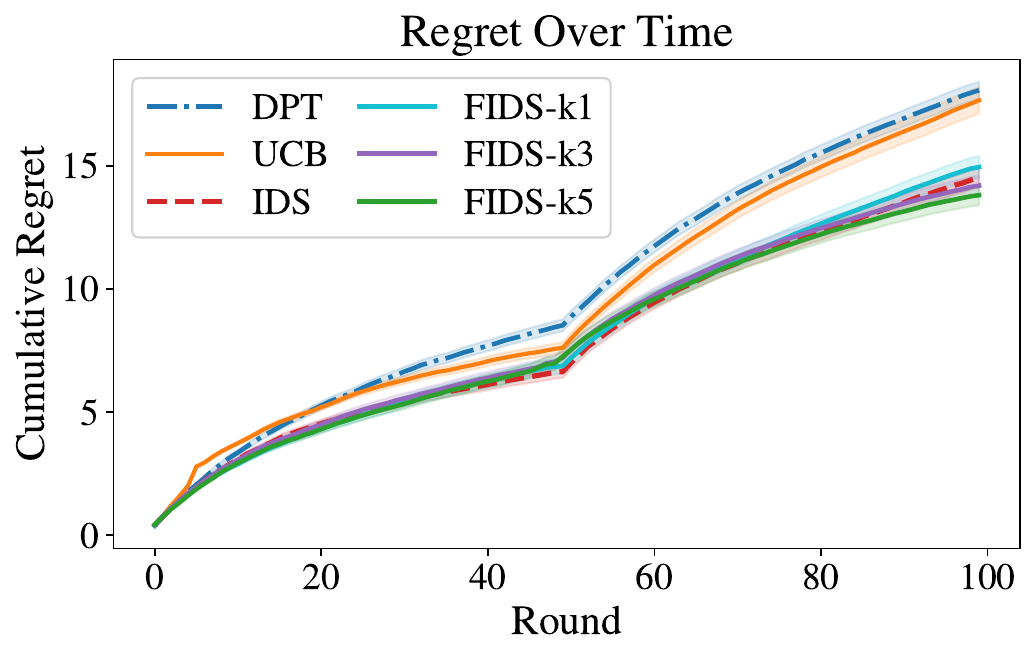}
        \caption{}
        \label{fig:top2_0.05_a}
    \end{subfigure}
    \hfill
    \begin{subfigure}[t]{0.32\linewidth}
        \centering
        \includegraphics[width=\linewidth]{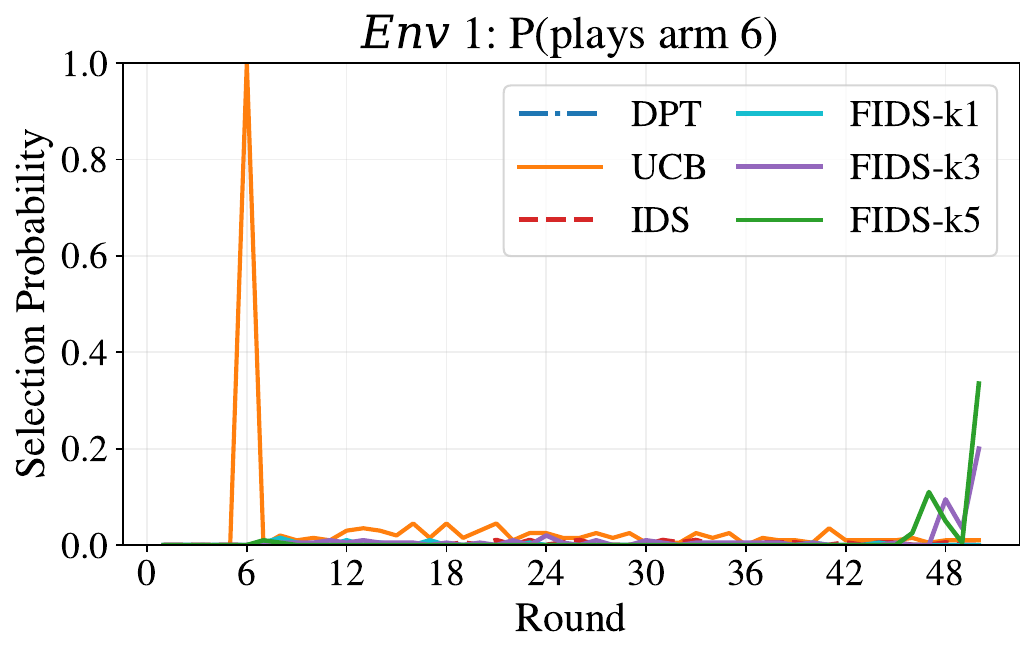}
        \caption{}
        \label{fig:top2_0.05_b}
    \end{subfigure}
    \hfill
    \begin{subfigure}[t]{0.32\linewidth}
        \centering
        \includegraphics[width=\linewidth]{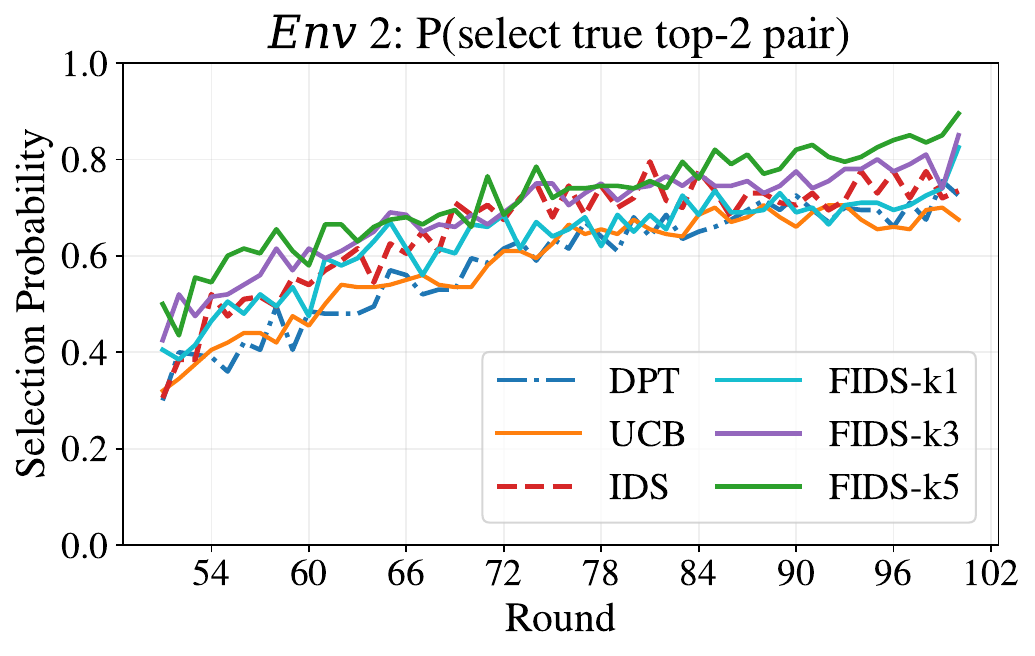}
        \caption{}
        \label{fig:top2_0.05_c}
    \end{subfigure}
    
    \caption{Evaluation Results for the Pair-Revealing Environment with \(\sigma_6^2=0.05^2\). (a) Cumulative regret.  (b) The probability of play arm 6 at each round. (c) The probability of selecting an arm that belongs to the true top-2 set of the underlying environment.}
    \label{fig:env2_0.05}
    \vspace{-5pt}
\end{figure}

\section{Towards Full FIDS Objective Training}\label{sec:full fids}
A major limitation of Algorithm \ref{alg:fids-offline} is that, due to the presence of \(k\), the size of the truncated window, it does not optimize the exact FIDS objective. This is because Algorithm \ref{alg:fids-offline} estimates \(H_t\left(A^\star_{t+1:n}\right)\) and \(H_t\left(A^\star_{t+1:n}\mid R_{t,a}\right)\) by first learning \(\mathbb{P}_t\left(A^\star_{t+1:n} \in \cdot\right)\) and \(\mathbb{P}_{t+1}\left(A^\star_{t+1:n}\in \cdot\right)\). This amounts to a \(|\mathcal{A}|^{n-t}\)-class classification problem and is therefore infeasible except for small values of \(n\). To address this issue, in this section we propose a new method to directly learn \(H_t\left(A^\star_{t+1:n}\right)\) and \(H_t\left(A^\star_{t+1:n}\mid R_{t,a}\right)\)thereby bypassing the aforementioned challenge. We describe the training procedures for learning these two quantities below.

\noindent{\bf On learning \(H_t\left(A^\star_{t+1:n}\right)\).} The training procedure consists of two stages, in which we train two different models, denoted by \(M_{\theta_1}\) and \(M_{\theta_2}\), respectively. \textit{Stage 1}: We train a model \(M_{\theta_1}\left(\cdot \mid H_{t},A^\star_{t+1,\dots, \ell}\right)\) that outputs a probability distribution over \(\mathcal{A}\), represented as a vector in \(\mathbb{R}^{|\mathcal{A}|}\), and predicts the conditional distribution \(\mathbb{P}_t\left(A^\star_{\ell+1} \in \cdot \mid A^\star_{t+1,\dots,\ell}\right)\) using the loss function in (\ref{loss:theta_1}). \textit{Stage 2}: The model \(M_{\theta_2}\) outputs a scalar that predicts \(H_t\left(A^\star_{t+1;n}\right)\) using the loss function in (\ref{loss:theta_2}).

\begin{align}
    \mathcal{L}_{\theta_1} = \sum_{\xi \in \mathcal{D}}\sum_{t=1}^n\sum_{\ell=t}^{n-1} \text{CE}\left(M_{\theta_1}\left(\cdot \mid h^\xi_{t},a^{\star,\xi}_{t+1,\dots, \ell}\right), a^{\star,\xi}_{\ell+1}\right) \label{loss:theta_1}
\end{align}

\begin{align}
    \mathcal{L}_{\theta_2} = \sum_{\xi \in \mathcal{D}}\sum_{t=1}^n \left(M_{\theta_2}\left(\cdot \mid h^\xi_t\right) - \sum_{\ell=t}^{n-1}\text{Entropy}\left( M_{\theta_1} \left(\cdot \mid h^\xi_t, a^\star_{t+1:\ell}\right)\right)\right)^2 \label{loss:theta_2}
\end{align}

\noindent{\bf On learning \(H_t\left(A^\star_{t+1:n} | R_{t,a}\right)\)}. Similarly, the training procedure consists of two stages, in which we train two models, denoted by \(M_{\phi_1}\) and \(M_{\phi_2}\), respectively. \textit{Stage 1}: We train a model \(M_{\phi_1}\left(\cdot \mid H_{t+1}, A^\star_{t+1:\ell}\right)\) that outputs a probability distribution over \(\mathcal{A}\) , represented as a vector in \(\mathbb{R}^{|\mathcal{A}|}\), and predicts the conditional distribution \(\mathbb{P}_{t+1}\left(A^\star_{\ell+1} \in \cdot \mid A^\star_{t+1,\dots,\ell}\right)\) using the loss function in (\ref{loss: phi_1}). \textit{Stage 2}: The model \(M_{\phi_2}\) outputs a vector in \(\mathbb{R}^{|\mathcal{A}|}\), where its \(a\)-th element, \( M_{\phi_2}\left(\cdot \mid H_t\right)[a]\), predicts \(H_t\left(A^\star_{t+1;n} \mid R_{t,a}\right)\) using the loss function in (\ref{loss: phi_2}).

\begin{align}
    \mathcal{L}_{\phi_1} = \sum_{\xi \in \mathcal{D}}\sum_{t=1}^n\sum_{\ell=t}^{n-1} \text{CE}\left(M_{\phi_1}\left(\cdot \mid h^\xi_{t+1},a^\star_{t+1,\dots, \ell}\right), a^\star_{\ell+1}\right) \label{loss: phi_1}
\end{align}

\begin{align}
    \mathcal{L}_{\phi_2} = \sum_{\xi\in \mathcal{D}}\sum_{t=1}^n\sum_{a \in \mathcal{A}} \mathbb{I}\left\{a^\xi_t=a\right\}/b_t(a)\cdot \left(M_{\phi_2}\left(\cdot\mid h_t^\xi\right)[a]-\sum_{\ell=t}^{n-1}\text{Entropy}\left(M_{\phi_1}\left(\cdot \mid h^\xi_t, a_t^\xi, r_t^\xi, a^{\star,\xi}_{t+1:\ell}\right)\right)\right)^2 \label{loss: phi_2}
\end{align}
where \(b^\xi_t\) denotes the behavior policy used to collect trajectory \(\xi\), and \(b^\xi_t(a)\) denotes the probability of choosing action \(a\) under \(b^\xi_t\).

One can show that, if \(M_{\theta_1}\) and \(M_{\theta_2}\) (resp. \(M_{\phi_1}\) and \(M_{\phi_2}\)) are global minimizers of (\ref{loss:theta_1}) and (\ref{loss:theta_2}) (resp. (\ref{loss: phi_1}) and (\ref{loss: phi_2})), then \(M_{\theta_2}\) (resp. \(M_{\phi_2}\)) perfectly recovers \(H_t\left(A^\star_{t+1:n}\right)\) (resp. \(H_t\left(A^\star_{t+1:n}\mid R_{t,a}\right)\)) under appropriate realizability assumptions on the model function classes and in the limit of infinite training data.

After training, we estimate the mutual information \(I_t\left(A^\star_{t+1:n};R_{t,a}\right)\) by calculating \(M_{\theta_2}\left(\cdot \mid H_t\right) - M_{\phi_2}\left(\cdot \mid H_t\right)[a]\).

\section{Conclusion}
We introduced FIDS, an algorithm for Bayesian nonstationary bandits
that balances instantaneous reward against information gained about
future optimal arms. FIDS matches the regret guarantee of Thompson
Sampling, yet analytical examples show it can significantly outperform
both TS and IDS by exploiting predictive structure across environments.
To handle settings where exact posterior inference is intractable, we
proposed a supervised-learning framework that learns the FIDS
policy with a truncated-window approximation from offline data. Experiments confirm that the learned policy
consistently outperforms supervised-learning baselines with alternative
exploration objectives. To remove the truncated-window approximation, we propose an alternative training pipeline in Section \ref{sec:full fids} that recovers the full FIDS policy at optimality. The empirical evaluation of this training method remains an important direction for future work.




\bibliographystyle{plainnat}
\bibliography{references}

\newpage

\appendix

\section{More Related Work}
\paragraph{Nonstationary Bandit Learning.}
Several variants of Thompson Sampling and UCB have been proposed for nonstationary bandits, including \cite{trovo2020sliding,gupta2011thompson,mellor2013thompson,kocsis2006discounted,garivier2008upper}. In the frequentist setting, recent works \cite{abbasi2023new,suk2022tracking} establish regret bounds of order \(\tilde{O}(\sqrt{ST})\), where \(S\) denotes the number of changes in the identity of the optimal arm. This dependence is closely related to our regret bound in Theorem~\ref{thm:regret}, as both results adapt to the temporal variation of the optimal-arm sequence. Indeed, Theorem 4.3 of \cite{min2023information} implies that, under suitable conditions, our regret bound recovers the same \(\tilde{O}(\sqrt{ST})\) rate. However, frequentist approaches generally do not leverage prior information and therefore may fail to exploit rich structural information encoded in the prior. This distinction becomes particularly important in environments with informative prior structure, such as the examples presented in Section~\ref{sec:examples}.

Predictive Sampling \cite{liu2023nonstationary} studies the same Bayesian nonstationary setting and is motivated by a similar insight: exploration strategies should account for future information in nonstationary environments. Our work differs in that it adopts a stronger regret notion in the theoretical analysis. Consequently, our regret guarantees directly imply guarantees under their framework, whereas the converse implication does not hold.

\paragraph{Information-Directed Methods in Settings beyond Bandits.}
The idea of optimizing information-ratio-based objectives has been extended to a variety of settings, including graph feedback \cite{hao2022contextual}, sparse linear bandits \cite{hao2021information}, linear partial monitoring \cite{kirschner2020information}, and MDPs \cite{hao2022regret}. These works typically define an information ratio tailored to the underlying problem structure and select actions or policies that minimize this quantity. Despite the diversity of these settings, they all assume a stationary underlying environment.

\paragraph{Learning Decision-Making Algorithms from Data.}
Our supervised-learning-based framework for approximating the FIDS policy is primarily inspired by \cite{lee2023supervised}, which introduces the Decision-Pretrained Transformer (DPT), a framework that learns to imitate the Thompson Sampling policy through supervised learning. More broadly, the idea of learning sequential decision-making algorithms from offline trajectories using Transformer-based architectures has been explored in several recent works, including Algorithm Distillation \cite{laskin2022context}, Decision Transformer \cite{chen2021decision}, and Multi-Game Decision Transformer \cite{lee2022multi}.

\section{Information Theoretic Preliminaries}\label{sec:information preliminaries}
In this section, \(X\) is limited to discrete random variables and \(Y, Z\) can be general random variables, under measure \(\mathbb{P}\).
\begin{lemma}(Mutual Information and Entropy, Theorem 2.4.1 in \cite{cover1999elements})\label{lemma:info and entropy}
    \begin{align*}
        H(X) = - \sum_{x} \mathbb{P}\left(X=x\right)\log\left(\mathbb{P}\left(X=x\right)\right)
    \end{align*}
    \begin{align*}
        I(X;Y) = H(X) - H(X|Y)
    \end{align*}
\end{lemma}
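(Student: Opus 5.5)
The plan is to prove the identity from the measure-theoretic definitions: mutual information as a relative entropy, and conditional entropy through a regular conditional distribution. The first display, the formula for $H(X)$, is simply the definition of entropy for a discrete random variable and needs no proof. The content is the identity $I(X;Y)=H(X)-H(X\mid Y)$ when $X$ is discrete and $Y$ is arbitrary. I would take
\[
I(X;Y)\triangleq D\big(\mathbb{P}_{(X,Y)}\,\big\|\,\mathbb{P}_X\otimes\mathbb{P}_Y\big)
\]
and $H(X\mid Y)\triangleq \mathbb{E}\big[-\log p(X\mid Y)\big]$, where $p(x\mid y)$ is a version of the regular conditional probability $\mathbb{P}(X=x\mid Y=y)$. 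Such a version exists because $X$ takes countably many values, so one can define $p(x\mid Y)=\mathbb{E}[\mathbb{1}\{X=x\}\mid Y]$ and modify it on a null set so that it is a probability vector for every $y$.

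\emph{Step 1: identify the density.} Let $S=\{x:\mathbb{P}(X=x)>0\}$. I would show that $\mathbb{P}_{(X,Y)}\ll \mathbb{P}_X\otimes\mathbb{P}_Y$ with Radon--Nikodym derivative
\[
f(x,y)=\frac{p(x\mid y)}{\mathbb{P}(X=x)},\qquad x\in S.
\]
To do so, check on rectangles $\{x\}\times B$:
\[
\int_B f(x,y)\,\mathbb{P}(X=x)\,\mathbb{P}_Y(dy)=\int_B p(x\mid y)\,\mathbb{P}_Y(dy)=\mathbb{P}(X=x,Y\in B),
\]
which is the defining property of the conditional probability. A $\pi$--$\lambda$ argument then extends this equality to all measurable sets.

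\emph{Step 2: compute the divergence.} By definition of relative entropy,
\[
I(X;Y)=\mathbb{E}\big[\log f(X,Y)\big]=\mathbb{E}\big[\log p(X\mid Y)-\log \mathbb{P}(X=x)\big|_{x=X}\big].
\]
Splitting the logarithm gives $\mathbb{E}[-\log \mathbb{P}(X=x)|_{x=X}]-\mathbb{E}[-\log p(X\mid Y)]=H(X)-H(X\mid Y)$, provided the expectation may be split.

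\emph{Step 3 (main obstacle): justify the split.} Both pieces $-\log \mathbb{P}(X=x)|_{x=X}$ and $-\log p(X\mid Y)$ are nonnegative. Their expectations are therefore well defined in $[0,\infty]$, and $H(X\mid Y)\le H(X)$ follows from Jensen's inequality applied conditionally on $Y$. When $H(X)<\infty$, both terms are finite, linearity applies, and the identity holds exactly.

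When $H(X)=\infty$, I would read the identity in the natural convention that it holds whenever the right-hand side is defined. Alternatively, one can note that in the paper $X$ is always $A^\star_{t+1:n}$, which takes at most $K^{n}$ values, so $H(X)\le n\log K<\infty$ and this case never arises.

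Finally, everything above is stated for a generic measure $\mathbb{P}$. Applying it with $\mathbb{P}$ replaced by the posterior $\mathbb{P}_t$ yields the conditional versions $I_t$, $H_t$ used throughout the paper.
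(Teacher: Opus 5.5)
Your proof is correct, but it is not what the paper does. The paper gives no proof and simply cites Theorem~2.4.1 of Cover and Thomas. That proof is the finite-sum manipulation $\sum_{x,y}p(x,y)\log\frac{p(x\mid y)}{p(x)}=-\sum_{x,y}p(x,y)\log p(x)+\sum_{x,y}p(x,y)\log p(x\mid y)$, and it is stated for \emph{discrete} $X$ and $Y$. You perform the same splitting of the log-likelihood ratio, but at the level of measures. You take $I(X;Y)=D(\mathbb{P}_{(X,Y)}\,\|\,\mathbb{P}_X\otimes\mathbb{P}_Y)$ and identify the density $p(x\mid y)/\mathbb{P}(X=x)$ through a regular conditional probability. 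You then justify splitting the expectation using nonnegativity of both pieces together with $H(X)<\infty$.

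This gives something the citation does not literally deliver. The appendix explicitly allows $Y$ to be a general random variable, and the paper genuinely uses the identity with continuous $Y$. Two examples are the entropy decomposition behind the Monte Carlo estimator in Section~\ref{sec:SL alg}, where $R_{t,a}$ is Gaussian, and step (e) of Lemma~\ref{lemma:bound term 2}, where $Y=\mathcal{F}_n$. Your version also makes explicit that the right-hand side can be $\infty-\infty$ when $H(X)=\infty$. That case is harmless here. One small correction: in the paper $X$ is not only $A^\star_{t+1:n}$ but also $A^\star_t$ and $A^\star_{1:n}$. All of these take at most $K^n$ values, so your bound $H(X)\le n\log K$ still covers every use.

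The citation's only advantage is brevity. Note also that passing from $\mathbb{P}$ to $\mathbb{P}_t$ implicitly requires a regular version of the posterior given $\mathcal{F}_{t-1}$. Such a version exists under the usual standard-Borel assumptions.
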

\begin{lemma}(Chain Rule for Entropy, Theorem 2.5.1 in \cite{cover1999elements})\label{lemma:chail rule entropy}
    \begin{align*}
        H(X_1, X_2, \dots, X_n) = \sum_{i=1}^n H(X_i \mid X_1,\dots,X_{i-1})
    \end{align*}
\end{lemma}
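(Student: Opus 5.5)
The plan is to reduce the chain rule to the two-variable identity $H(X,Y) = H(X) + H(Y\mid X)$ for discrete random variables, and then induct on $n$. Throughout, the $X_i$ are discrete, consistent with the convention at the start of this section. I use the standard definitions underlying Lemma~\ref{lemma:info and entropy}: $H(X) = -\mathbb{E}[\log \mathbb{P}(X)]$, and the conditional entropy
\[
H(Y\mid X) = -\sum_{x,y}\mathbb{P}(X=x,Y=y)\log \mathbb{P}(Y=y\mid X=x) = -\mathbb{E}\bigl[\log \mathbb{P}(Y\mid X)\bigr],
\]
which equals $\sum_x \mathbb{P}(X=x)\, H(Y\mid X=x)$. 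Terms with zero probability are dropped, using the convention $0\log 0 = 0$.

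\emph{Two-variable case.} On the support of $(X,Y)$ the joint pmf factors as
\[
\mathbb{P}(X=x,Y=y) = \mathbb{P}(X=x)\,\mathbb{P}(Y=y\mid X=x).
\]
Taking logarithms gives $\log \mathbb{P}(X,Y) = \log \mathbb{P}(X) + \log \mathbb{P}(Y\mid X)$ almost surely. Taking $-\mathbb{E}[\cdot]$ of both sides and using linearity of expectation yields $H(X,Y) = H(X) + H(Y\mid X)$. Linearity is legitimate here: the terms are nonnegative, so the sums are well defined in $[0,\infty]$ even when entropies are infinite.

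\emph{Induction.} The case $n=1$ is trivial. For the inductive step, apply the two-variable identity with $X = (X_1,\dots,X_{n-1})$, which is again a discrete random variable, and $Y = X_n$. This gives
\[
H(X_1,\dots,X_n) = H(X_1,\dots,X_{n-1}) + H(X_n \mid X_1,\dots,X_{n-1}).
\]
Expanding the first term by the inductive hypothesis gives the claimed sum.

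The main point requiring care is the first step: the identity $\log \mathbb{P}(X,Y)=\log\mathbb{P}(X)+\log\mathbb{P}(Y\mid X)$ holds only on the support. Restricting all sums to atoms of positive probability handles this. Nothing here depends on the posterior: the same argument applies verbatim under $\mathbb{P}_t$, giving the version $H_t(A^\star_{t+1:n}) = \sum_{\ell} H_t(A^\star_{\ell+1}\mid A^\star_{t+1:\ell})$ that the training losses in Section~\ref{sec:full fids} rely on.
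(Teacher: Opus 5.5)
Your proof is correct and is the standard argument. The paper gives no proof of its own and simply cites Theorem 2.5.1 of Cover and Thomas, which is proved exactly by repeated use of $H(X,Y)=H(X)+H(Y\mid X)$, or equivalently by factoring the joint pmf into conditionals and taking $-\mathbb{E}[\log(\cdot)]$. Your version, which peels off the last coordinate and handles zero-probability atoms and infinite entropies via nonnegativity, is sound, and it transfers verbatim to $\mathbb{P}_t$ and to any ordering of the variables (as used in Lemma~\ref{lemma:sum term1}).
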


\begin{lemma}(Chain Rule for Mutual Information, Theorem 2.5.2 in \cite{cover1999elements})\label{lemma:chain rule info}
    \begin{align*}
        I(\left[X_1,X_2,\dots,X_n\right];Y) = \sum_{i=1}^n I(X_i;Y \mid X_1,\dots,X_{i-1})
    \end{align*}
\end{lemma}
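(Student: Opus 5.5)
We prove the chain rule for mutual information in Lemma~\ref{lemma:chain rule info} by reducing it to the chain rule for entropy, Lemma~\ref{lemma:chail rule entropy}, and subtracting term by term. Throughout, $X_1,\dots,X_n$ are discrete and $Y$ is a general random variable. In every use in this paper the $X_i$ are optimal-arm indices taking values in the finite set $\mathcal{A}$, so all entropies are finite (at most $\log K$ each). This rules out any $\infty-\infty$ issue in the subtractions below.

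The first step applies Lemma~\ref{lemma:info and entropy} to the vector $X_{1:n}=(X_1,\dots,X_n)$, which is itself a discrete random variable:
\[
I([X_1,\dots,X_n];Y)=H(X_1,\dots,X_n)-H(X_1,\dots,X_n\mid Y).
\]
The unconditional term expands directly by Lemma~\ref{lemma:chail rule entropy}:
\[
H(X_1,\dots,X_n)=\sum_{i=1}^n H(X_i\mid X_{1:i-1}).
\]

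The second step is the analogous expansion conditioned on $Y$. Recall that $H(X_{1:n}\mid Y)=\mathbb{E}\big[H(X_{1:n}\mid Y=y)|_{y=Y}\big]$, where $H(\cdot\mid Y=y)$ is the entropy under a regular conditional distribution of $X_{1:n}$ given $Y=y$. Such a distribution exists because $X_{1:n}$ takes values in a countable (here finite) set. For each fixed $y$, the conditional law is an ordinary discrete distribution, so Lemma~\ref{lemma:chail rule entropy} applies pointwise:
\[
H(X_{1:n}\mid Y=y)=\sum_{i=1}^n H(X_i\mid X_{1:i-1},Y=y).
\]
Taking expectations over $Y$ and using linearity (all terms are nonnegative and bounded) gives
\[
H(X_{1:n}\mid Y)=\sum_{i=1}^n H(X_i\mid X_{1:i-1},Y).
\]

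The third step subtracts the two expansions index by index. The $i$-th difference is
\[
H(X_i\mid X_{1:i-1})-H(X_i\mid X_{1:i-1},Y),
\]
which is by definition the conditional mutual information $I(X_i;Y\mid X_{1:i-1})$. This is the conditional version of Lemma~\ref{lemma:info and entropy}: apply it under the conditional law given $X_{1:i-1}=x_{1:i-1}$, then average over $x_{1:i-1}$. Summing over $i$ yields the claimed identity.

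The only delicate point is the second step, namely making conditioning on a possibly continuous $Y$ (for example a Gaussian reward $R_{t,a}$) rigorous. This requires regular conditional distributions, together with the fact that the joint conditional law of $(X_{1:i-1},X_i)$ given $Y$ disintegrates consistently into the conditional law of $X_i$ given $(X_{1:i-1},Y)$. I would invoke the standard existence of regular conditional probabilities for random elements taking values in a countable space, which also makes the pointwise chain rule hold for almost every $y$. The same argument applies verbatim under the posterior measure $\mathbb{P}_t$, which gives the version used for $I_t$.
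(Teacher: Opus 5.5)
Your proof is correct and is the standard Cover--Thomas argument (Theorem 2.5.2) that the paper cites without reproducing: write $I(X_{1:n};Y)=H(X_{1:n})-H(X_{1:n}\mid Y)$, expand both sides with the entropy chain rule, and subtract index by index, with your regular-conditional-distribution remarks as a welcome extra for continuous $Y$. One minor correction to your aside: the paper also uses this lemma, via symmetry of mutual information, to split the history $\mathcal{F}_n$ into the rounds $(A_t,R_{t,A_t})$ (step (d) of Lemma~\ref{lemma:bound term 2}, and Corollary~\ref{corollary:conditional entropy} with $Z=\mathcal{F}_{t-1}$), so not every use has finite-valued components in the decomposed slot; those uses are instead justified by writing each conditional mutual information as a difference of conditional entropies of the finite-valued argument $A^\star_{1:n}$, e.g.\ $H(A^\star_{1:n}\mid\mathcal{F}_{t-1})-H(A^\star_{1:n}\mid\mathcal{F}_t)$, and telescoping.
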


A direct application of the above lemma gives the following Corollary,
\begin{corollary}\label{corollary:chain rule info}
    \begin{align*}
        I(X;Y)- I(Z;Y)= I(X;Y|Z) - I(Z;Y|X)
    \end{align*}
\end{corollary}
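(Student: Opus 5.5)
The plan is to expand the joint mutual information $I([X,Z];Y)$ in two ways using the chain rule for mutual information (Lemma~\ref{lemma:chain rule info}), and then compare the two expansions.

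First, apply Lemma~\ref{lemma:chain rule info} with $n=2$, $X_1 = Z$ and $X_2 = X$. This gives
\begin{align*}
I([Z,X];Y) = I(Z;Y) + I(X;Y \mid Z).
\end{align*}
Second, apply the same lemma with the order reversed, $X_1 = X$ and $X_2 = Z$, which gives
\begin{align*}
I([X,Z];Y) = I(X;Y) + I(Z;Y \mid X).
\end{align*}
Since $[X,Z]$ and $[Z,X]$ generate the same $\sigma$-algebra, the two left-hand sides coincide. Equating the right-hand sides yields
\begin{align*}
I(Z;Y) + I(X;Y\mid Z) = I(X;Y) + I(Z;Y\mid X),
\end{align*}
and rearranging gives the claimed identity $I(X;Y) - I(Z;Y) = I(X;Y\mid Z) - I(Z;Y\mid X)$.

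The only delicate point is that the rearrangement subtracts mutual-information terms, so they must be finite. For $X$ discrete, we have $I(X;Y) \le H(X)$, and likewise $I(X;Y\mid Z) \le H(X)$. These are finite whenever $H(X) < \infty$, which holds in the paper's use since $X$ takes values in a finite set such as $\mathcal{A}^{n-t}$.

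For $Z$, I would either assume $I(Z;Y) < \infty$ or note that $Z$ is also a finitely supported optimal-arm vector in the intended application, so that $I(Z;Y) \le H(Z) < \infty$. Then $I(Z;Y\mid X) \le I([X,Z];Y) < \infty$ follows from the second expansion. Aside from this finiteness bookkeeping, the argument is a direct two-line consequence of the chain rule.

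The same argument holds verbatim under the posterior measure $\mathbb{P}_t$, which gives the version with $I_t$ used later in the paper.
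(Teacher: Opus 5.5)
Your proof is correct and matches the paper's argument: expand $I([X,Z];Y)$ via Lemma~\ref{lemma:chain rule info} in both orders, equate the two expansions, and rearrange. Your finiteness bookkeeping is a reasonable addition that the paper omits, and it is satisfied in the paper's only use of the corollary, where $X=A_t^\star$ and $Z=A^\star_{t+1:n}$ are both finite-valued.
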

\begin{proof}
    The proof is done by simply noticing that \(I(X;Y) + I(Z;Y|X) = I(Z;Y)+I(X;Y|Z)=I([X,Z];Y)\) by Lemma \ref{lemma:chain rule info}.
\end{proof}
\begin{corollary}\label{corollary:conditional entropy}
    \begin{align*}
        H(X|Y,Z) \leq H(X|Y)
    \end{align*}
\end{corollary}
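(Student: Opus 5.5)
The statement is Corollary~\ref{corollary:conditional entropy}, $H(X\mid Y,Z)\le H(X\mid Y)$, i.e.\ conditioning reduces entropy. The plan is to rewrite the difference as a conditional mutual information and show that quantity is nonnegative. By Lemma~\ref{lemma:info and entropy}, applied under the conditional law given $Y$ and then averaged over $Y$, we get
\[
H(X\mid Y)-H(X\mid Y,Z)=I(X;Z\mid Y).
\]
An alternative route uses Lemma~\ref{lemma:chain rule info} on $I([Y,Z];X)$, which equals both $I(Y;X)+I(Z;X\mid Y)$ and $H(X)-H(X\mid Y,Z)$. Subtracting $I(Y;X)=H(X)-H(X\mid Y)$ gives the same identity. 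So the corollary reduces to showing $I(X;Z\mid Y)\ge 0$.

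For nonnegativity, I would express $I(X;Z\mid Y)$ as an expectation over $Y$ of a Kullback--Leibler divergence. For almost every $y$, it is $D\bigl(\mathbb{P}_{X,Z\mid Y=y}\,\|\,\mathbb{P}_{X\mid Y=y}\otimes\mathbb{P}_{Z\mid Y=y}\bigr)$. Each such divergence is nonnegative by Gibbs' inequality, i.e.\ Jensen's inequality applied to the convex function $-\log$, so the average is nonnegative and the inequality follows.

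The main obstacle is measure-theoretic, since $Y$ and $Z$ may be general (e.g.\ Gaussian rewards) while $X$ is discrete. I would handle this by using regular conditional distributions of $X$ given $(Y,Z)$ and given $Y$. Because $X$ is discrete, I would write $H(X\mid Y)=\mathbb{E}\bigl[-\sum_x p(x\mid Y)\log p(x\mid Y)\bigr]$. Then $p(x\mid Y)=\mathbb{E}[p(x\mid Y,Z)\mid Y]$ by the tower property, and concavity of $\phi(p)=-p\log p$ together with conditional Jensen give
\[
\mathbb{E}\bigl[\phi(p(x\mid Y,Z))\mid Y\bigr]\le \phi\bigl(p(x\mid Y)\bigr).
\]
Summing over $x$ (monotone convergence, since the terms are nonnegative) and taking expectations yields the claim directly, without needing densities for $Y$ or $Z$. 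This last argument is the one I would actually write down, as it is the cleanest in the paper's generality.
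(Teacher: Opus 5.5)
Your proposal is correct. Your ``alternative route'' is exactly the paper's proof. The paper rewrites both sides as $H(X)-I(X;\cdot)$, reduces the claim to $I(X;[Y,Z])\ge I(X;Y)$, and concludes from the chain rule $I([Y,Z];X)=I(Y;X)+I(Z;X\mid Y)$. It leaves the nonnegativity of $I(Z;X\mid Y)$ implicit; that fact is part of Lemma~\ref{lemma:non-neg}, which is stated without proof.

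The argument you say you would actually write down is genuinely different. It bypasses mutual information and proves that conditioning reduces entropy directly, from three ingredients: the tower property $p(x\mid Y)=\mathbb{E}[p(x\mid Y,Z)\mid Y]$, conditional Jensen for the bounded concave function $\phi(p)=-p\log p$ on $[0,1]$, and Tonelli to exchange $\sum_x$ with the expectation.

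The paper's route buys brevity, given the information-theoretic toolkit it assembles in the appendix. Yours buys self-containedness in three ways:
\begin{enumerate}
\item It proves the nonnegativity fact the paper assumes; your Jensen step is essentially that proof for discrete $X$.
\item It needs no densities and no definition of mutual information for the general variables $Y,Z$.
\item It never cancels $H(X)$ from both sides, so it holds in $[0,\infty]$ without assuming $H(X)<\infty$. That assumption is harmless in the paper, where $X$ is a finite-valued optimal-arm sequence.
\end{enumerate}
Your form is also exactly what is needed where the corollary is used, in Lemma~\ref{lemma:sum term1}. There $Z=\mathcal{F}_{t-1}$ contains continuous rewards, and conditional entropy given $\mathcal{F}_{t-1}$ is identified with the expected posterior entropy $\mathbb{E}[H_t(\cdot)]$.
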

\begin{proof}
    \begin{align*}
        H(X|Y,Z) \leq H(X|Y) \Leftrightarrow H(X) - I(X;[Y,Z]) \leq H(X) - I(X;Y) \Leftrightarrow I(X;[Y,Z]) \geq I(X;Y)
    \end{align*}
    which is true by Lemma \ref{lemma:chain rule info}.
\end{proof}

\begin{lemma}(Non-negativity of Mutual information and Entropy)\label{lemma:non-neg}
    \begin{align*}
        H(X)\geq 0; H(X|Y)\geq 0; I(X;Y)\geq 0; I(X;Y|Z) \geq 0
    \end{align*}
\end{lemma}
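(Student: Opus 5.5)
The plan is to reduce all four inequalities to two elementary facts: $-\log p\ge 0$ for $p\in[0,1]$, and Gibbs' inequality (nonnegativity of KL divergence). Since $X$ is discrete while $Y,Z$ may be general, I will work with the conditional probability mass function $p_X(x\mid Y)\triangleq\mathbb{P}(X=x\mid Y)$. Because $X$ takes countably many values, a regular version of this conditional law exists. I then use the standard definitions
\[
H(X\mid Y)=\mathbb{E}\Bigl[-\sum_x p_X(x\mid Y)\log p_X(x\mid Y)\Bigr],\qquad I(X;Y\mid Z)=H(X\mid Z)-H(X\mid Y,Z),
\]
which are consistent with Lemma~\ref{lemma:info and entropy}.

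\emph{Step 1 (entropies).} Every $\mathbb{P}(X=x)\in[0,1]$, so each summand $-\mathbb{P}(X=x)\log\mathbb{P}(X=x)$ is nonnegative, with the convention $0\log 0=0$. Hence $H(X)\ge 0$. Applying the same pointwise bound to the conditional pmf shows that the integrand defining $H(X\mid Y)$ is nonnegative almost surely. Its expectation is therefore $\ge 0$.

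\emph{Step 2 (mutual information).} I will write
\[
I(X;Y)=H(X)-H(X\mid Y)=\mathbb{E}\Bigl[\sum_x p_X(x\mid Y)\log\frac{p_X(x\mid Y)}{\mathbb{P}(X=x)}\Bigr],
\]
using the tower property $\mathbb{E}[p_X(x\mid Y)]=\mathbb{P}(X=x)$ to rewrite $H(X)$ inside the same expectation. For each realization of $Y$, the inner sum is the KL divergence $D\bigl(p_X(\cdot\mid Y)\,\|\,p_X\bigr)$. This divergence is nonnegative by Jensen's inequality applied to the concave $\log$:
\[
-\sum_x q(x)\log\frac{p(x)}{q(x)}\ \ge\ -\log\sum_{x:\,q(x)>0} p(x)\ \ge\ 0 .
\]
Taking expectations gives $I(X;Y)\ge 0$. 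For the conditional version, I repeat the identical argument with every law additionally conditioned on $Z$. This expresses $I(X;Y\mid Z)$ as $\mathbb{E}\bigl[D\bigl(p_X(\cdot\mid Y,Z)\,\|\,p_X(\cdot\mid Z)\bigr)\bigr]$, which is nonnegative because each realization of the integrand is.

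\emph{Main obstacle.} The only delicate step is the rewriting of $H(X)-H(X\mid Y)$ as an expected KL divergence, in particular justifying that the expectation may be split when entropies could be infinite. I would first handle the case $H(X)<\infty$, where every term is integrable and the manipulation is legitimate. If $H(X)=\infty$, I would adopt the KL form itself as the definition of $I(X;Y)$; it is automatically nonnegative, so the claim still holds.
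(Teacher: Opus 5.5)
The paper gives no proof of this lemma; it treats it as standard background in the Cover and Thomas style, where $H\ge 0$ holds termwise and $I\ge 0$ follows from the information inequality (nonnegativity of KL divergence via Jensen). Your argument is exactly that textbook route, correctly extended to general $Y,Z$ through regular conditional pmfs (the generality the paper actually needs, since $Y$ is a real-valued reward or the history $\mathcal{F}_n$), and it is correct; the $H(X)=\infty$ caveat is harmless but moot here, because every discrete $X$ the paper uses (e.g.\ $A^\star_{1:n}$) takes at most $K^n$ values.
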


\begin{lemma}\label{lemma:root difference}
    Let \(a,b \geq 0\), \(\sqrt{a}-\sqrt{b} \leq \sqrt{|a-b|}\)
\end{lemma}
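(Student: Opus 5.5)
We must prove Lemma~\ref{lemma:root difference}: for \(a,b\ge 0\), \(\sqrt{a}-\sqrt{b}\le\sqrt{|a-b|}\). The plan is to split on which of \(a\) and \(b\) is larger.

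If \(a\le b\), monotonicity of the square root gives \(\sqrt{a}-\sqrt{b}\le 0\le\sqrt{|a-b|}\), so there is nothing to prove.

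If \(a>b\), both sides are nonnegative. Since squaring is monotone on \([0,\infty)\), it suffices to compare the squares. Expanding the left square,
\begin{align*}
(\sqrt{a}-\sqrt{b})^2 = a - 2\sqrt{ab} + b .
\end{align*}
We want this to be at most \(a-b\). That is equivalent to \(2b\le 2\sqrt{ab}\), i.e.\ \(\sqrt{b}\le\sqrt{a}\) (after dividing by \(2\sqrt{b}\) when \(b>0\); the case \(b=0\) is trivial). This holds because \(b<a\).

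An equivalent route is the subadditivity of the square root: \(\sqrt{a}=\sqrt{(a-b)+b}\le\sqrt{a-b}+\sqrt{b}\), which can be rearranged directly.

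There is no real obstacle here. The only care needed is to handle the sign case first, so that squaring both sides is legitimate.
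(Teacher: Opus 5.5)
Your proof is correct and follows the paper's route. You dispose of the case \(a\le b\) trivially, then square both sides for \(a>b\) and reduce to \(b\le\sqrt{ab}\), that is, \(\sqrt{b}\le\sqrt{a}\); your explicit justification for squaring and your separate handling of \(b=0\) are slightly more careful than the paper's one-line chain of equivalences.
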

\begin{proof}
    Clearly we only need to consider the case where \(a \geq b\). In such a case,
    \begin{align*}
        \sqrt{a}-\sqrt{b} \leq \sqrt{|a-b|} \Leftrightarrow a+b-2\sqrt{ab} \leq a-b \Leftrightarrow a \geq b
    \end{align*}
\end{proof}

\section{Supporting Lemmas in the Proof of Theorem \ref{thm:regret}}\label{sec:supporting lemmas}
\begin{lemma}\label{lemma:transformation}
    \begin{align}
    \mathbb{E}_t\left[R_{t,A_t^\star} - R_{t,A_t}\right] &= \sum_{a \in [K]}\mathbb{P}_t(A_t=a)\mathbb{E}_t\left[R_{t,A_t^\star} - R_{t,a} \mid A_t=a\right]\notag\\
    &\stackrel{(a)}{=} \sum_{a \in [K]}\pi_t(a)\mathbb{E}_t\left[R_{t,A_t^\star} - R_{t,a}\right]\label{eq:expect transformation}
\end{align}
(a) uses the fact that conditioned on \(\mathcal{F}_{t-1}\), \(A_t\) is jointly independent of \(R_{t,A_t^\star}\) and \(R_{t,a}\).
\begin{align}
    I_t\left(A_t^\star; \left(A_t, R_{t,A_t}\right)\right) &\stackrel{(a)}{=} I_t\left(A_t^\star; A_t\right) + I_t\left(A_t^\star; R_{t,A_t} \mid A_t\right)\notag\\
    &\stackrel{(b)}{=} I_t\left(A_t^\star; R_{t,A_t} \mid A_t\right)\notag\\
    &= \sum_{a \in [K]} \mathbb{P}_t(A_t = a) I_t\left(A_t^\star; R_{t,a} \mid A_t=a\right)\notag\\
    &\stackrel{(c)}{=} \sum_{a \in [K]} \mathbb{P}_t(A_t = a) I_t\left(A_t^\star; R_{t,a}\right)\notag\\
    &= \sum_{a \in [K]} \pi_t(a) I_t\left(A_t^\star; R_{t,a}\right)\label{eq:information transformation}
\end{align}
(a) uses Lemma \ref{lemma:chain rule info}; (b) uses the fact conditioned on \(\mathcal{F}_{t-1}\), \(A_t\) is independent of \(A_t^\star\), and the mutual information between two independent variables is 0; (c) uses the fact that conditioned on \(\mathcal{F}_{t-1}\), \(A_t\) is jointly independent of \(A_t^\star\) and \(R_{t,a}\).

Similarly,
\begin{align}
    I_t\left(\left[A_{t+1}^\star,\dots,A_n^\star\right]; \left(A_t, R_{t,A_t}\right)\right)
    = \sum_{a \in [K]} \pi_t(a) I_t\left(\left[A_{t+1}^\star,\dots,A_n^\star\right]; R_{t,a}\right)\label{eq:future information transformation}
\end{align}
\end{lemma}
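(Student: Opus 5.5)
The statement immediately above is Lemma~\ref{lemma:transformation}. Its first two identities already carry justifications, so the plan is mainly to make those steps rigorous and then prove the third identity, Equation~\eqref{eq:future information transformation}, by the same route with $A_t^\star$ replaced by the random vector $A^\star_{t+1:n}$.

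The key structural fact is that, conditioned on $\mathcal{F}_{t-1}$, the action $A_t$ is drawn from $\pi_t$ using only independent internal randomization. Hence $A_t$ is jointly independent of the environment $\nu$ and the reward table $(R_{s,a})_{s\ge t, a\in\mathcal{A}}$. Since every $A_s^\star$ is a function of $\nu$, $A_t$ is independent under $\mathbb{P}_t$ of the tuple $(A^\star_{t:n}, (R_{t,a})_a)$. I would state this once and use it for all three identities.

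\textbf{Expectation identity.} Condition on $A_t=a$, write the expectation as $\sum_a \mathbb{P}_t(A_t=a)\,\mathbb{E}_t[R_{t,A_t^\star}-R_{t,a}\mid A_t=a]$, and drop the conditioning by the independence above. This uses $\mathbb{P}_t(A_t=a)=\pi_t(a)$.

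\textbf{Information identities.} Apply the chain rule (Lemma~\ref{lemma:chain rule info}) with $X=A^\star_{t+1:n}$ (or $A_t^\star$) and $Y=(A_t,R_{t,A_t})$:
\[
I_t(X;(A_t,R_{t,A_t})) = I_t(X;A_t) + I_t(X;R_{t,A_t}\mid A_t).
\]
The first term vanishes because $A_t$ is independent of $X$. For the second, expand the conditional mutual information as an average over the discrete values of $A_t$:
\[
I_t(X;R_{t,A_t}\mid A_t)=\sum_a \mathbb{P}_t(A_t=a)\, I_t(X;R_{t,a}\mid A_t=a).
\]
Then use the joint independence of $A_t$ and $(X,R_{t,a})$: the joint law of $(X,R_{t,a})$ given $A_t=a$ equals its law under $\mathbb{P}_t$, so the conditional mutual information equals $I_t(X;R_{t,a})$.

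\textbf{Main obstacle.} The only delicate point is measure-theoretic. $R_{t,a}$ may be continuous, so the mutual information should be defined via KL divergence between the joint and product measures rather than discrete entropies. The vector $X$ is discrete with at most $K^{n-t}$ values, so $I=H(X)-H(X\mid Y)$ (Lemma~\ref{lemma:info and entropy}) still applies. With that, the argument is identical for $A_t^\star$ and for $A^\star_{t+1:n}$. I would verify that conditioning on the event $\{A_t=a\}$, which has positive probability whenever $\pi_t(a)>0$, leaves the joint law unchanged by independence. Terms with $\pi_t(a)=0$ contribute zero on both sides.
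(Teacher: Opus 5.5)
Your proposal is correct and follows the paper's argument essentially step for step. For the expectation you condition on $A_t=a$ and use that, given $\mathcal{F}_{t-1}$, $A_t$ is independent of $(\nu,(R_{t,a})_a)$. For the information identities you apply the chain rule and use the same independence to kill $I_t(X;A_t)$ and to drop the conditioning on $\{A_t=a\}$, and the ``Similarly'' case follows by taking $X=A^\star_{t+1:n}$. Your extra remarks (KL-based mutual information for continuous rewards, and the $\pi_t(a)=0$ terms) make explicit what the paper leaves implicit without changing the route.
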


\begin{lemma}\label{lemma:bound term 2}
    \begin{align}
    \mathbb{E}\left[\sum_{t=1}^n \sigma\sqrt{2K\cdot I_t(A^\star_{t+1:n};(A_t, R_{t,A_t}))}\right] &\stackrel{(a)}{\leq} \sum_{t=1}^n \sigma\sqrt{2K\cdot \mathbb{E}\left[I_t(A^\star_{t+1:n};(A_t, R_{t,A_t}))\right]}\notag\\
    &= \sum_{t=1}^n \sigma\sqrt{2K\cdot I(A^\star_{t+1:n};(A_t, R_{t,A_t})|\mathcal{F}_{t-1})}\notag\\
    &\stackrel{(b)}{\leq} \sigma\sqrt{2n\cdot K \cdot \sum_{t=1}^n I(A^\star_{t+1:n};(A_t, R_{t,A_t})|\mathcal{F}_{t-1})}\notag\\
    &\stackrel{(c)}{\leq} \sigma\sqrt{2n\cdot K \cdot \sum_{t=1}^n I(A^\star_{t+1:n};(A_t, R_{t,A_t})|\mathcal{F}_{t-1})}\notag\\
    &\stackrel{(d)}{\leq} \sigma\sqrt{2n\cdot K \cdot I(A^\star_{1:n};\mathcal{F}_n)}\notag\\
    &\stackrel{(e)}{\leq} \sigma\sqrt{2n\cdot K \cdot H(A^\star_{1:n})} \label{eq:term2}
\end{align}
(a) uses Jensen's Inequality; (b) uses Cauchy–Schwarz inequality; (c) uses Lemma \ref{lemma:chain rule info} and Lemma \ref{lemma:non-neg}; (d) uses Lemma \ref{lemma:chain rule info}; (e) uses Lemma \ref{lemma:info and entropy} and \ref{lemma:non-neg}.
\end{lemma}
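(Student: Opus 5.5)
The plan is to bound the sum of per-round square-root information terms by a single square root of the total information, and then telescope that total into $I(A^\star_{1:n};\mathcal{F}_n)$ by the chain rule. Throughout, $Y_t \triangleq (A_t, R_{t,A_t})$ denotes the round-$t$ observation, so that $\mathcal{F}_t = (\mathcal{F}_{t-1}, Y_t)$.

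\textbf{Step 1 (remove the posterior randomness).} Since $x\mapsto\sqrt{x}$ is concave, Jensen's inequality applied to each round gives $\mathbb{E}\bigl[\sqrt{I_t(A^\star_{t+1:n};Y_t)}\bigr]\le\sqrt{\mathbb{E}[I_t(A^\star_{t+1:n};Y_t)]}$. Next I identify $\mathbb{E}[I_t(X;Y)]$ with the conditional mutual information $I(X;Y\mid\mathcal{F}_{t-1})$. This is the definition of conditional mutual information as the average over the conditioning variable of the mutual information under the regular conditional law $\mathbb{P}_t$.

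\textbf{Step 2 (Cauchy--Schwarz).} For nonnegative $x_t$ we have $\sum_{t=1}^n\sqrt{x_t}\le\sqrt{n\sum_{t=1}^n x_t}$. Applying this with $x_t = I(A^\star_{t+1:n};Y_t\mid\mathcal{F}_{t-1})$ pulls out the factor $\sqrt{n}$ and leaves $\sigma\sqrt{2nK\sum_t I(A^\star_{t+1:n};Y_t\mid\mathcal{F}_{t-1})}$.

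\textbf{Step 3 (enlarge the target to the whole sequence).} For each $t$, Lemma~\ref{lemma:chain rule info} gives
\[
I(A^\star_{1:n};Y_t\mid\mathcal{F}_{t-1}) = I(A^\star_{t+1:n};Y_t\mid\mathcal{F}_{t-1}) + I(A^\star_{1:t};Y_t\mid\mathcal{F}_{t-1},A^\star_{t+1:n}).
\]
The second term is nonnegative by Lemma~\ref{lemma:non-neg}, so each summand can be replaced by $I(A^\star_{1:n};Y_t\mid\mathcal{F}_{t-1})$.

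\textbf{Step 4 (telescope and conclude).} Apply the chain rule (Lemma~\ref{lemma:chain rule info}) with the roles of the arguments swapped: fix $X = A^\star_{1:n}$ and decompose the observation sequence $\mathcal{F}_n=(Y_1,\dots,Y_n)$. This gives $\sum_t I(A^\star_{1:n};Y_t\mid\mathcal{F}_{t-1}) = I(A^\star_{1:n};\mathcal{F}_n)$. Finally, since $A^\star_{1:n}$ is discrete, Lemma~\ref{lemma:info and entropy} together with $H(A^\star_{1:n}\mid\mathcal{F}_n)\ge 0$ gives $I(A^\star_{1:n};\mathcal{F}_n)\le H(A^\star_{1:n})$.

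\textbf{Main obstacle.} The inequalities themselves are routine. I expect the delicate point to be the measure-theoretic justification, for possibly continuous rewards, of two facts: the identity $\mathbb{E}[I_t(\cdot;\cdot)] = I(\cdot;\cdot\mid\mathcal{F}_{t-1})$, and the use of the chain rule when the conditioning variables are general rather than discrete. I would handle both by working with regular conditional distributions. I would write each mutual information as an expected KL divergence, which is well defined for general $Y$ because $X$ is finite-valued. The chain rule then follows from additivity of the log-likelihood ratio.
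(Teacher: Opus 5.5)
Your proposal is correct and follows the paper's proof step for step. Both arguments use Jensen together with the identification $\mathbb{E}[I_t(\cdot;\cdot)] = I(\cdot;\cdot\mid\mathcal{F}_{t-1})$, then Cauchy--Schwarz, then enlarge the target from $A^\star_{t+1:n}$ to $A^\star_{1:n}$ via the chain rule and nonnegativity, then telescope to $I(A^\star_{1:n};\mathcal{F}_n)$ and bound by $H(A^\star_{1:n})$; your Step 3 also writes out the enlargement that the paper's step (c) intends but, by a typo, displays with the same expression on both sides.
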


\begin{lemma}\label{lemma:bound term 1}
    \begin{align*}
        \sigma\sqrt{2K\cdot g_t\left(\pi^{\text{TS}}_t\right)} - \sigma\sqrt{2K\cdot \tilde{g}_t\left(\pi^{\text{TS}}_t\right)} \leq \sigma\sqrt{2K}\cdot \sqrt{I_t(A^\star_t;(\tilde{A}_t, R_{t,\tilde{A}_t})|A^\star_{t+1:n})}
    \end{align*}
\begin{proof}
    When \(I_t(A^\star_t;(\tilde{A}_t, R_{t,\tilde{A}_t})) \leq I_t(A^\star_{t+1:n};(\tilde{A}_t, R_{t,\tilde{A}_t}))\). We have \(\sigma\sqrt{2K\cdot g_t\left(\pi^{\text{TS}}_t\right)} - \sigma\sqrt{2K\cdot \tilde{g}_t\left(\pi^{\text{TS}}_t\right)} \leq 0\) directly.
    When \(I_t(A^\star_t;(\tilde{A}_t, R_{t,\tilde{A}_t})) > I_t(A^\star_{t+1:n};(\tilde{A}_t, R_{t,\tilde{A}_t}))\). Then
    \begin{align*}
    \sigma\sqrt{2K\cdot g_t\left(\pi^{\text{TS}}_t\right)} - \sigma\sqrt{2K\cdot \tilde{g}_t\left(\pi^{\text{TS}}_t\right)} &\stackrel{(a)}{\leq} \sigma\sqrt{2K}\cdot \sqrt{I_t(A^\star_t;(\tilde{A}_t, R_{t,\tilde{A}_t})) - I_t(A^\star_{t+1:n};(\tilde{A}_t, R_{t,\tilde{A}_t}))}\\
    &\stackrel{(b)}{=} \sigma\sqrt{2K} \cdot \sqrt{I_t(A^\star_t;(\tilde{A}_t, R_{t,\tilde{A}_t})|A^\star_{t+1:n}) - I_t(A^\star_{t+1:n};(\tilde{A}_t, R_{t,\tilde{A}_t})|A_t^\star)}\\
    &\leq \sigma\sqrt{2K}\cdot \sqrt{I_t(A^\star_t;(\tilde{A}_t, R_{t,\tilde{A}_t})|A^\star_{t+1:n})}
\end{align*}
(a) holds because of Lemma \ref{lemma:root difference}; (b) holds because of Lemma \ref{corollary:chain rule info}.

We conclude the proof by combining the two cases.
\end{proof}
\end{lemma}

\begin{lemma}\label{lemma:sum term1}
    \begin{align}
    \sigma\sqrt{2K}\cdot\mathbb{E}\left[\sum_{t=1}^n \sqrt{g_t\left(\pi^{\text{TS}}_t\right)} - \sqrt{\cdot \tilde{g}_t\left(\pi^{\text{TS}}_t\right)}\right] &\stackrel{(a)}{\leq} \sigma\sqrt{2K}\cdot \mathbb{E}\left[\sum_{t=1}^n\sqrt{I_t(A^\star_t;(\tilde{A}_t, R_{t,\tilde{A}_t})|A^\star_{t+1:n})}\right]\notag\\
    &\leq \sigma\sqrt{2K}\cdot \mathbb{E}\left[\sum_{t=1}^n\sqrt{H_t(A^\star_t|A^\star_{t+1:n})}\right]\notag\\
    &\leq \sigma\sqrt{2K}\cdot \sum_{t=1}^n\sqrt{\mathbb{E}\left[H_t(A^\star_t|[A^\star_{t+1},\dots, A^\star_n]\right])}\notag\\
    &= \sigma\sqrt{2K}\cdot \sum_{t=1}^n\sqrt{H(A^\star_t|([A^\star_{t+1},\dots, A^\star_n],\mathcal{F}_{t-1}))}\notag\\
    &\stackrel{(b)}{\leq} \sigma\sqrt{2K}\cdot \sum_{t=1}^n\sqrt{H(A^\star_t|[A^\star_{t+1},\dots, A^\star_n])}\notag\\
    &\leq \sigma\sqrt{2K}\cdot \sqrt{n\cdot \sum_{t=1}^nH(A^\star_t|[A^\star_{t+1},\dots, A^\star_n])}\notag\\
    &\stackrel{(c)}{=} \sigma\sqrt{2K \cdot n\cdot H([A^\star_{1},\dots, A^\star_n])}\label{eq:term1}
\end{align}
(a) uses Lemma \ref{lemma:bound term 1}; (b) uses Corollary \ref{corollary:conditional entropy}; (c) uses Lemma \ref{lemma:chail rule entropy}.
\end{lemma}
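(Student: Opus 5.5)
The statement at the end is Lemma~\ref{lemma:sum term1}. The plan is to bound the expected sum term by term, then aggregate with Jensen, Cauchy--Schwarz and the chain rule for entropy.

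\textbf{Step 1: per-round bound.} For each $t$, Lemma~\ref{lemma:bound term 1} gives, almost surely,
\[
\sqrt{g_t(\pi^{\text{TS}}_t)}-\sqrt{\tilde g_t(\pi^{\text{TS}}_t)} \le \sqrt{I_t\bigl(A^\star_t;(\tilde A_t,R_{t,\tilde A_t})\mid A^\star_{t+1:n}\bigr)},
\]
where $\tilde A_t\sim\pi^{\text{TS}}_t$. This step uses the equivalence between the action-averaged and the $(A_t,R_{t,A_t})$ forms of the information terms, which Lemma~\ref{lemma:transformation} provides. Next, I bound the conditional mutual information by a conditional entropy. Since $A^\star_t$ is discrete, Lemma~\ref{lemma:info and entropy} together with nonnegativity of conditional entropy (Lemma~\ref{lemma:non-neg}) gives
\[
I_t(A^\star_t;\cdot\mid A^\star_{t+1:n})\le H_t(A^\star_t\mid A^\star_{t+1:n}).
\]

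\textbf{Step 2: aggregation over rounds.} First take expectations and apply Jensen's inequality to the concave square root, moving $\mathbb{E}$ inside each $\sqrt{\cdot}$. The tower property then identifies
\[
\mathbb{E}\bigl[H_t(A^\star_t\mid A^\star_{t+1:n})\bigr] = H\bigl(A^\star_t\mid A^\star_{t+1:n},\mathcal F_{t-1}\bigr).
\]
Because conditioning reduces entropy (Corollary~\ref{corollary:conditional entropy}), this is at most $H(A^\star_t\mid A^\star_{t+1:n})$. Applying Cauchy--Schwarz to the sum over $t$ of these square roots yields
\[
\sqrt{\,n\sum_{t=1}^n H(A^\star_t\mid A^\star_{t+1:n})}.
\]

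\textbf{Step 3: chain rule.} Apply the chain rule for entropy (Lemma~\ref{lemma:chail rule entropy}) to the sequence ordered in reverse as $A^\star_n,A^\star_{n-1},\dots,A^\star_1$. This gives exactly
\[
\sum_{t=1}^n H(A^\star_t\mid A^\star_{t+1:n}) = H(A^\star_{1:n}),
\]
and multiplying by $\sigma\sqrt{2K}$ yields the claimed bound.

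\textbf{Main obstacle.} The delicate step is the conditioning identity in Step~2. $H_t$ is computed under the posterior $\mathbb{P}_t$, so its expectation is a conditional entropy given $\mathcal F_{t-1}$. For this to hold, the conditioning on $\mathcal F_{t-1}$ must be compatible with the extra conditioning on $A^\star_{t+1:n}$. I would verify this directly from the definition of conditional entropy as an average over the joint law of $(\mathcal F_{t-1},A^\star_{t+1:n})$. Everything else in the argument is routine.
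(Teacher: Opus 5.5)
Your proposal is correct and follows the paper's proof essentially step for step. The chain is the same: the per-round bound from Lemma~\ref{lemma:bound term 1}, then $I_t(A^\star_t;\cdot\mid A^\star_{t+1:n})\le H_t(A^\star_t\mid A^\star_{t+1:n})$, Jensen, the tower identity $\mathbb{E}[H_t(\cdot)]=H(A^\star_t\mid A^\star_{t+1:n},\mathcal{F}_{t-1})$, conditioning reduces entropy, Cauchy--Schwarz, and finally the chain rule for entropy applied in reverse order.
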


\section{Numerical Results}
Here we report additional numerical results for the one-step predictive environment in Sec.~\ref{subsec:onestep_predictive_env}.
\begin{figure}[t]
    \centering
\begin{subfigure}[h]{0.45\linewidth}
    \centering
    \includegraphics[width=\linewidth]{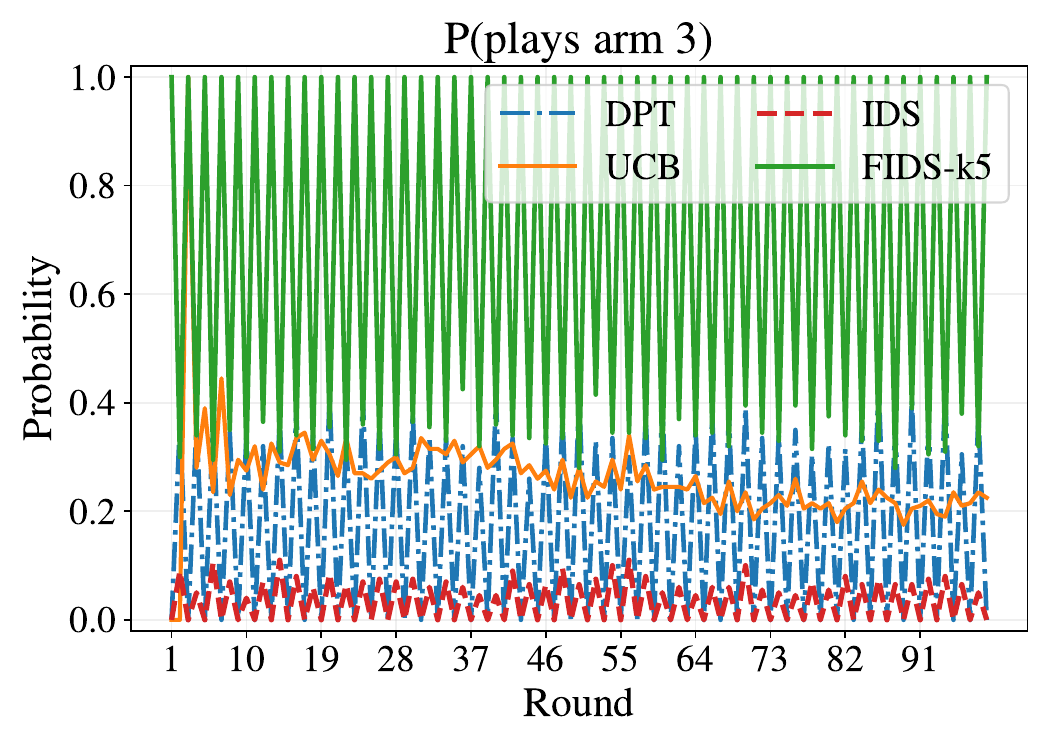}
    \caption{The probability of choosing arm 3}
    \label{fig:onepredict mag}
\end{subfigure}
\hfill
\begin{subfigure}[h]{0.45\linewidth}
    \centering
    \includegraphics[width=\linewidth]{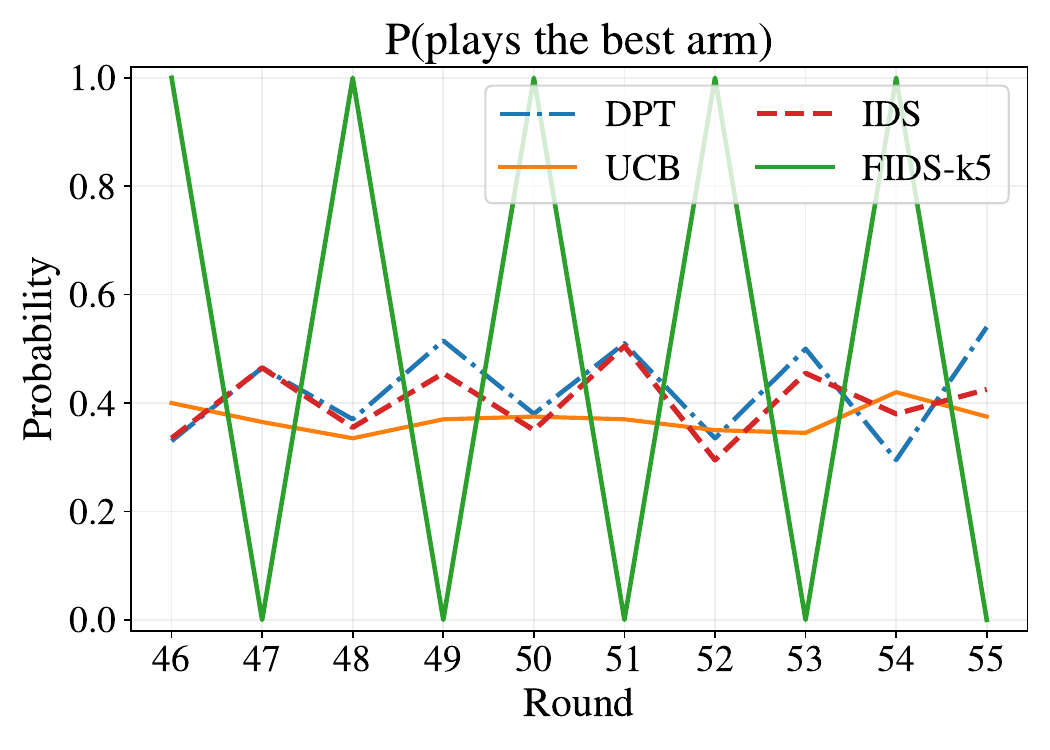}
    \caption{The probability of choosing the true best arm}
    \label{fig:onepredict seg best}
\end{subfigure}
\end{figure}

\section{Other Results}\label{sec:exp details}

\subsection{Proof of Lemma \ref{lemma:2_supp}}\label{app:proof_lemma_2supp}
\begin{proof}
    Denote \(E_{t,a} \coloneqq \mathbb{E}_t\left[R_{t,a}\right]\), \(g_{t,a} \coloneqq I_t\left(\left[A^\star_{t+1},\dots,A^\star_n\right];R_{t,a}\right)\), and \(E_t \coloneqq \left[E_{t,1},\dots,E_{t,K}\right]^T\), \(g_t \coloneqq \left[g_{t,1},\dots,g_{t,K}\right]^T\). Our objective function can then be written as \(\rho(\pi_t) \coloneqq \pi_t \cdot E_t + \sqrt{\Gamma \cdot \pi_t\cdot g_t}\), where \(\Gamma = 2\sigma^2K\) Consider some fixed maximizer \(\pi^\star\) of Equation~\eqref{eq:equivalent definition of FIDS}. The partial derivative of \(\rho\) with respect to \(\pi_{t,a}\) at \(\pi^\star\) is \(\frac{\partial \rho}{\partial \pi_{t,a}} \left(\pi^\star\right) \coloneqq E_{t,a} + \frac{\Gamma g_{t,a}}{2\sqrt{\Gamma g_{t} \cdot \pi^\star}}\).

    Denote \(d^\star \coloneqq \max_{a \in \mathcal{A}} \frac{\partial \rho}{\partial \pi_{t,a}}\left(\pi^\star\right)\). The important observation here is that for any \(a\) such that \(\pi^\star_a > 0\), \(\frac{\partial \rho}{\partial \pi_{t,a}}\left(\pi^\star\right) = d^\star\), because otherwise one must be able to construct a new valid policy by transferring some probability from \(a\) to other arms with larger partial derivative and that will only increase the objective value. Therefore, for all \(a \in B_t \coloneqq \left\{a \in \mathcal{A} : \pi^\star_a > 0\right\}\),
    \begin{align}\label{eq:test}
        E_{t,a} + \frac{\Gamma g_{t,a}}{2\sqrt{\Gamma g_{t} \cdot \pi^\star}} = d^\star
    \end{align}

    Reorder the set \(B_t\) such that \(E_{t,a_1} \geq \dots\geq E_{t,a_{\left|B_t\right|}}\). Note that there always exists a \(\beta \in [0,1]\) such that \(\sum_{a \in B_t} \pi^\star_a E_{t,a} = \beta E_{t,a_1} + (1-\beta) E_{t,a_{\left|B_t\right|}}\), which, by Equation~\eqref{eq:test}, also implies that \(\sum_{a \in B_t} \pi^\star_a g_{t,a} = \beta g_{t,a_1} + (1-\beta) g_{t,a_{\left|B_t\right|}}\). Therefore, we can construct a new policy that only assigns possibilities on the two arms \(a_1\) and \(a_{\left|B_t\right|}\), and this policy has the same objective value as \(\pi^\star\) and thus optimal.
\end{proof}

\subsection{Analytic Solutions}\label{sec:analytic sol}
Depending on which round we are currently in, the mutual information term \(I_t\left(\left[A^\star_{t+1},\dots,A^\star_n\right];R_{t,a}\right)\) is equal to \(I_t\left(\left[Z_1,Z_2\right];R_{t,a}\right)\), if we are in the first half, or \(I_t\left(Z_2;R_{t,a}\right)\) if we are in the second half. By independency, \(I_t\left(\left[Z_1,Z_2\right];R_{t,a}\right) = I_t\left(Z_1;R_{t,a}\right) + I_t\left(Z_2;R_{t,a}\right)\).

In the first half, by independency, when \(a \neq 6\) \(I_t\left(\left[Z_1,Z_2\right];R_{t,a}\right) = I_t\left(Z_1;R_{t,a}\right)\), while \(a = 6\) \(I_t\left(\left[Z_1,Z_2\right];R_{t,a}\right) = I_t\left(Z_2;R_{t,6}\right)\). The main problem is to calculate the posterior distribution. Essentially, we need to maintain \(q_t(Z_1)\) and \(w_t(Z_2)\).

if \(a_t = 6\), \(q_{t+1}(Z_1)=q_t(Z_1)\), and the posterior update rule for \(w_{t+1}(Z_2)\) is
\begin{align*}
    w_{t+1}(Z_2) \propto w_t(Z_2)\mathcal{N}\left(r_t;\frac{Z_2}{7},0.1^2\right)
\end{align*}

While for \(a_t \neq 6\), \(w_{t+1}(Z_2)=w_t(Z_2)\), and the posterior update rule for \(q_{t+1}(Z_1)\) is
\begin{align*}
    q_{t+1}\left(Z_1\right) \propto q_t(Z_1) \mathcal{N}\left(r_t; \mu_z,0.1^2\right)
\end{align*}

This equation can be calculated analytically because \(z\) is finite.
The posterior reward distribution can also be calculated by
\begin{align*}
    \mathbb{P}_t\left(R_{t,a}=r\right) = \sum_z \mathbb{P}_t\left(R_{t,a}=r | Z_1 = z\right)\mathbb{P}_t\left(Z_1=z\right)
\end{align*}

In the second half, the posterior update rule for \(w_{t+1}(Z_2)\) is
\begin{align*}
    w_{t+1}(Z_2) \propto w_t(Z_2)\mathcal{N}\left(r_t;\mu_z,0.5^2\right)
\end{align*}

\subsection{Other Lemmas}
\begin{proposition}\label{prop:iid examples}
    In Example \ref{ex:i.i.d. examples}, FIDS is the optimal policy and collects \(0.35\cdot n\) cumulative reward, while TS and IDS collect \(0.325\cdot n\) and \(0.3294 \cdot n\) cumulative reward respectively.
\end{proposition}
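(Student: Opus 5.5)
The plan has three parts. First, I will show that in Example~\ref{ex:i.i.d. examples} the history carries no information about the current or future environments, so every posterior quantity is a fixed, history-independent number. Second, I will use this to identify FIDS and prove it is optimal. Third, I will compute the per-round reward of TS and of IDS(ns) directly. Since every round looks the same, each cumulative reward is $n$ times a per-round value.

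\textbf{Step 1 (posteriors are constant).} The $\nu_t$ are i.i.d., and the rewards at round $s$ depend only on $\nu_s$ and independent noise. Hence $\mathcal{F}_{t-1}$ is independent of $(\nu_t,\dots,\nu_n)$. So $\mathbb{P}_t$ restricted to the current and future environments equals the prior. This gives
\[
\mathbb{E}_t[R_{t,1}] = \tfrac12(0.4+0.3)=0.35, \qquad \mathbb{E}_t[R_{t,2}] = \tfrac12(0.2+0.4)=0.3,
\]
and $\mathbb{P}_t(A_t^\star=1)=\tfrac12$.

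Moreover $R_{t,a}$ depends only on $\nu_t$ and noise, while $A^\star_{t+1:n}$ depends only on $(\nu_{t+1},\dots,\nu_n)$. These are independent even conditionally on $\mathcal{F}_{t-1}$, so $I_t(A^\star_{t+1:n};R_{t,a})=0$ for both arms. The FIDS objective in Equation~\eqref{eq:equivalent definition of FIDS} therefore reduces to $\sum_a\pi_t(a)\mathbb{E}_t[R_{t,a}]$, which is maximized uniquely by putting all mass on arm~1. FIDS thus earns $0.35$ per round.

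\textbf{Step 2 (optimality of FIDS).} Take any policy. Its action $A_t$ is a function of $\mathcal{F}_{t-1}$ and internal randomness, both independent of $\nu_t$. Hence
\[
\mathbb{E}[R_{t,A_t}]=\sum_a \mathbb{P}(A_t=a)\,\mathbb{E}[\mu_{t,a}]\le \max_a \mathbb{E}[\mu_{t,a}]=0.35 .
\]
Summing over $t$ bounds the cumulative reward of every policy by $0.35n$, which FIDS attains.

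\textbf{Step 3 (TS).} By Step~1, TS samples $A_t$ from $\mathbb{P}_t(A_t^\star=\cdot)=(\tfrac12,\tfrac12)$. Its per-round reward is $\tfrac12(0.35)+\tfrac12(0.3)=0.325$.

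\textbf{Step 4 (IDS(ns)).} Here the information term is nonzero because $R_{t,a}$ does depend on $A_t^\star=\mathbf{1}\{\text{env }1\}$. Writing $h$ for the binary entropy,
\[
g_1 = h(0.35)-\tfrac12\bigl(h(0.4)+h(0.3)\bigr), \qquad g_2 = h(0.3)-\tfrac12\bigl(h(0.2)+h(0.4)\bigr).
\]
Bernoulli rewards are $\tfrac12$-sub-Gaussian, so I take $\sigma=\tfrac12$; with $K=2$ this gives $2\sigma^2K=1$. With $\alpha=\pi_t(1)$, Equation~\eqref{eq:definition of NIDS} becomes maximizing
\[
f(\alpha)=0.3+0.05\alpha+\sqrt{g_2+\alpha(g_1-g_2)}
\]
over $[0,1]$. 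This function is concave, so I set $f'(\alpha)=0$, i.e.\ $\sqrt{g_2+\alpha(g_1-g_2)}=(g_2-g_1)/0.1$, and clip the solution to $[0,1]$. I expect $g_2>g_1$, which makes the optimum interior. The collected reward is then $0.3+0.05\alpha^\star$, and it should evaluate to $0.3294$.

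\textbf{Main obstacle.} The main obstacle is this last numerical step. The value depends on the log base used for entropy (nats versus bits) and on the choice $\sigma=\tfrac12$. I would first compute in nats with $\sigma=\tfrac12$ and check that $\alpha^\star\approx0.588$. If that does not match, I would revisit these two conventions rather than the structure of the argument.
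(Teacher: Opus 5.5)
Steps 1--3 are correct. Your Step 2 is actually a more careful optimality argument than the paper's, which only notes $\mathbb{E}_t[R_{t,1}]=0.35>0.3=\mathbb{E}_t[R_{t,2}]$.

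\textbf{The gap is Step 4:} the additive objective \eqref{eq:definition of NIDS} does not produce $0.3294n$. Your mutual informations are right; in nats $g_1\approx 0.0055$ and $g_2\approx 0.0242$, exactly the paper's values. But with $\sigma=\tfrac12$ your stationarity condition requires $g_2+\alpha(g_1-g_2)=\bigl((g_2-g_1)/0.1\bigr)^2\approx 0.035>g_2$, i.e.\ $\alpha\approx -0.57$. Equivalently, $f'(0)=0.05-(g_2-g_1)/(2\sqrt{g_2})\approx 0.05-0.060<0$, so the concave $f$ is decreasing on $[0,1]$ and $\alpha^\star=0$. IDS(ns) then always pulls the more informative arm 2 and earns $0.3n$. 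Your premise ``$g_2>g_1$ makes the optimum interior'' is necessary, not sufficient.

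Your fallback does not rescue this. Switching to bits only enlarges the information bonus. An interior optimum needs $\sigma<0.42$, and hitting $\alpha^\star=0.588$ needs $\sigma\approx 0.31$. Yet any valid sub-Gaussian constant here is at least the standard deviation $\sqrt{0.35\cdot 0.65}\approx 0.48$.

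\textbf{What the statement actually refers to:} the figure is for the ratio form \eqref{eq:def of ids}, which is what the paper's proof solves. Your choice of IDS(ns) is understandable given the Remark, but it is not the object the proposition is about. The ratio is well defined here because $g_1,g_2>0$.

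Using $\mathbb{E}_t[R_{t,A_t^\star}]=0.4$, the gaps are $\Delta_1=0.05$ and $\Delta_2=0.1$. You minimize the convex function $(0.1-0.05\alpha)^2/\bigl(g_2-\alpha(g_2-g_1)\bigr)$ over $[0,1]$. Its stationary point is $\alpha^\star=\Delta_2/(\Delta_1-\Delta_2)-2g_2/(g_1-g_2)\approx -2+2(0.0242)/0.0187\approx 0.588$. This gives $(0.3+0.05\alpha^\star)n\approx 0.3294n$.

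In this form $\sigma$ drops out, and rescaling $g$ (the log base) does not move the minimizer. So the conventions you flagged as the main obstacle were never the issue; the choice of objective was. With unrounded $g_1,g_2$ one gets $\alpha^\star\approx 0.591$ and about $0.3295n$; the paper's $0.3294$ comes from plugging in the rounded values.
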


\begin{proof}
    Notice that \(\mathbb{E}_t\left[R_{t,1}\right] = 0.35, \mathbb{E}_t\left[R_{t,2}\right]=0.3\) for every \(t\). Thus, the optimal policy is to choose arm 1 w.p. 1 at every \(t\). FIDS will actually do that by noticing that \(I_t\left(\left[A_{t+1}^\star,\dots, A_n^\star\right];R  _{t,a}\right)=0\) for every \(t\) and \(a\). And this will give cumulative reward \(0.35 \cdot n\).

    TS will play each arm with equal probability at each round, and the cumulative reward is \(0.5 \cdot (0.35+0.3) \cdot n = 0.325\cdot n\).

     We calculate the mutual information terms \(I_t\left(A_t^\star; (R_{t,1})\right)\) and \(I_t\left(A_t^\star; (R_{t,2})\right)\) explicitly as shown below
     
    \(I_t\left(A_t^\star; (R_{t,1})\right) = \mathbb{P}(A_t^\star = 1)\text{KL}(\mathbb{P}(R_{t,1} \in \cdot | A_t^\star=1) || \mathbb{P}(R_{t,1} \in \cdot)) + \mathbb{P}(A_t^\star = 2)\text{KL}(\mathbb{P}(R_{t,1} \in \cdot | A_t^\star=2) || \mathbb{P}(R_{t,1} \in \cdot)) \approx 0.0055\)

     \(I_t\left(A_t^\star; (R_{t,2})\right) = \mathbb{P}(A_t^\star = 1)\text{KL}(\mathbb{P}(R_{t,2} \in \cdot | A_t^\star=1) || \mathbb{P}(R_{t,2} \in \cdot)) + \mathbb{P}(A_t^\star = 2)\text{KL}(\mathbb{P}(R_{t,2} \in \cdot | A_t^\star=2) || \mathbb{P}(R_{t,2} \in \cdot)) \approx 0.0242\)
     
     And then we calculate the IDS policy by solving Equation~\eqref{eq:def of ids} - \(\pi_t^{\text{IDS}}(1) \approx 0.588\) and \(\pi_t^{\text{IDS}}(2) \approx 0.412\). The cumulative reward will be \((0.588 \cdot 0.35 + 0.412 \cdot 0.3)\cdot n = 0.3294\cdot n\). 
\end{proof}

\begin{lemma}\label{lemma:env2 subg}
    Fix constants \(\alpha,\beta, \sigma \in \mathbb{R}\), for the environment with \(\mu_{t,a} \in [\alpha,\beta]\) and \(R_{t,a} \sim \mathcal{N}(\mu_{t,a}, \sigma^2_{t,a})\) where \(\sigma_{t,a}\) is a deterministic non-negative constant no larger than \(\sigma\). Then, conditioned on \(\mathcal{F}_t\),
    \begin{align*}
        R_{t,a} - \mathbb{E}\left[R _{t,a}\mid \mathcal{F}_t\right]
    \end{align*} is \(\sqrt{\left((b-a)/2\right)^2 + \sigma^2}\) sub-Gaussian.
\end{lemma}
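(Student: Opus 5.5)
The plan is to split the centered reward into a ``mean-uncertainty'' part and a ``noise'' part, bound the moment generating function of each, and combine them via the tower rule. I read the stated constant $(b-a)/2$ as $(\beta-\alpha)/2$, the half-width of the interval containing the means. Write $\mu \coloneqq \mu_{t,a}$ and $\varepsilon \coloneqq R_{t,a}-\mu_{t,a}$. By the model in Section~\ref{sec:problem formulation}, conditional on $\nu$ (and hence on $\mu$ together with the history), $R_{t,a}\sim\mathcal{N}(\mu,\sigma_{t,a}^2)$. Therefore
\[
\mathbb{E}\left[\varepsilon \mid \mu,\mathcal{F}_t\right]=0 ,\qquad \mathbb{E}\left[R_{t,a}\mid\mathcal{F}_t\right]=\mathbb{E}\left[\mu\mid\mathcal{F}_t\right],
\]
which gives the decomposition
\[
R_{t,a}-\mathbb{E}\left[R_{t,a}\mid\mathcal{F}_t\right]=\bigl(\mu-\mathbb{E}\left[\mu\mid\mathcal{F}_t\right]\bigr)+\varepsilon .
\]

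The key steps are as follows.

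\emph{Step 1 (mean-uncertainty part).} Conditional on $\mathcal{F}_t$, $\mu$ is a random variable supported in $[\alpha,\beta]$. Hoeffding's lemma then gives
\[
\mathbb{E}\left[e^{\lambda(\mu-\mathbb{E}[\mu\mid\mathcal{F}_t])}\mid\mathcal{F}_t\right]\le e^{\lambda^2(\beta-\alpha)^2/8}\quad\text{for all }\lambda\in\mathbb{R}.
\]

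\emph{Step 2 (noise part).} Since $\varepsilon\mid(\mu,\mathcal{F}_t)\sim\mathcal{N}(0,\sigma_{t,a}^2)$ with the deterministic variance $\sigma_{t,a}^2\le\sigma^2$, we have
\[
\mathbb{E}\left[e^{\lambda\varepsilon}\mid\mu,\mathcal{F}_t\right]=e^{\lambda^2\sigma_{t,a}^2/2}\le e^{\lambda^2\sigma^2/2}.
\]

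\emph{Step 3 (combine).} Condition first on $(\mu,\mathcal{F}_t)$. The factor $e^{\lambda(\mu-\mathbb{E}[\mu\mid\mathcal{F}_t])}$ is measurable with respect to this conditioning, so Step~2 bounds the inner expectation. Then take the expectation given $\mathcal{F}_t$ and apply Step~1:
\[
\mathbb{E}\left[e^{\lambda(R_{t,a}-\mathbb{E}[R_{t,a}\mid\mathcal{F}_t])}\mid\mathcal{F}_t\right]\le \exp\Bigl(\tfrac{\lambda^2}{2}\bigl(((\beta-\alpha)/2)^2+\sigma^2\bigr)\Bigr).
\]
This is exactly $\sqrt{((\beta-\alpha)/2)^2+\sigma^2}$ sub-Gaussianity.

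The main obstacle is justifying Step~2, i.e.\ that conditioning on the history does not distort the Gaussian law of the noise. This requires that, given $\nu$, the round-$t$ reward $R_{t,a}$ is independent of the past observations, which is how the model is set up. It also requires that $\mathcal{F}_t$ be interpreted as the information available before $R_{t,a}$ is drawn, as in Assumption~\ref{subgaussian assumption} and Proposition~\ref{prop:generilized TS property}, where it plays the role of $\mathcal{F}_{t-1}$. Otherwise, if $A_t=a$, the reward $R_{t,a}$ would already be observed and the centered quantity would be identically zero, so the claim would hold trivially in that case. I would state this reading explicitly and then run Steps~1--3 as written.
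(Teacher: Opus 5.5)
Your proposal is correct and follows essentially the same route as the paper. You split the centered reward into $(\mu_{t,a}-\mathbb{E}[\mu_{t,a}\mid\mathcal{F}_t])+\epsilon$, bound the first moment generating function by Hoeffding's lemma and the second by the Gaussian one with $\sigma_{t,a}\le\sigma$, and multiply. Your tower-rule step (conditioning on $(\mu_{t,a},\mathcal{F}_t)$ first) carefully justifies the paper's factorization, which tacitly assumes $\epsilon$ is independent of $(\mu_{t,a},\mathcal{F}_t)$. Your remark on reading $\mathcal{F}_t$ as pre-observation information, with the trivial case $A_t=a$ otherwise, correctly patches a point the paper glosses over.
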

\begin{proof}
Note that \(R_{t,a} - \mathbb{E}\left[R _{t,a}\mid \mathcal{F}_t\right] = \mu_{t,a} + \epsilon - \mathbb{E}\left[\mu_{t,a} \mid \mathcal{F}_t\right]\). Therefore, 
    \begin{align*}
        \mathbb{E}\left[\exp\left(\lambda\left(R_{t,a} - \mathbb{E}\left[R _{t,a}\mid \mathcal{F}_t\right]\right)\right) \mid \mathcal{F}_t\right]
        &= \mathbb{E}\left[\exp\left(\lambda\left(\mu_{t,a} + \epsilon - \mathbb{E}\left[\mu_{t,a} \mid \mathcal{F}_t\right]\right)\right)\mid \mathcal{F}_t\right]\\
        &= \mathbb{E}\left[\exp\left(\lambda\left(\mu_{t,a} - \mathbb{E}\left[\mu_{t,a}\mid \mathcal{F}_t\right]\right)\right) \cdot \exp\left(\lambda\epsilon\right) \mid \mathcal{F}_t\right]\\
        &= \mathbb{E}\left[\exp\left(\lambda\left(\mu_{t,a} - \mathbb{E}\left[\mu_{t,a}\mid \mathcal{F}_t\right]\right)\right) \mid \mathcal{F}_t\right] \cdot \mathbb{E}\left[\exp(\lambda \epsilon)\right]\\
        &\leq \exp\left(\lambda^2(\beta-\alpha)^2/8\right) \cdot \exp\left(\lambda^2\sigma^2/2\right)\\
        &= \exp\left(\frac{\lambda^2\left(\left(\beta-\alpha\right)^2/4 + \sigma^2\right)}{2}\right)
    \end{align*}
\end{proof}


\newpage
\clearpage

\end{document}